\DeclareMathOperator*{\argmax}{arg\,max}
\newcommand\numberthis{\addtocounter{equation}{1}\tag{\theequation}}
\newcommand{\comment}[1]{}
\newtheorem{theorem}{Theorem}
\newtheorem{corollary}{Corollary}[theorem]
\crefname{section}{Sec.}{Secs.}
\Crefname{section}{Section}{Sections}
\Crefname{table}{Table}{Tables}
\crefname{table}{Tab.}{Tabs.}
\begin{document}

%%%%%%%%% TITLE - PLEASE UPDATE

\title{Balanced Product of Calibrated Experts for Long-Tailed Recognition}

\author{
  Emanuel Sanchez Aimar$^{1}$,
  Arvi Jonnarth$^{1,}$\thanks{Affiliation: Husqvarna Group, Huskvarna, Sweden.}, 
  Michael Felsberg$^{1,}$\thanks{Co-affiliation: University of KwaZulu-Natal, Durban, South Africa.}, Marco Kuhlmann$^{2}$\\
  $^{1}$%Computer Vision Laboratory, 
  Department of Electrical Engineering, 
  Linköping University, Sweden \\
  $^{2}$%Artificial Intelligence and Integrated Computer Systems Division, 
  Department of Computer and Information Science, 
  Linköping University, Sweden \\
  \tt\small{\{emanuel.sanchez.aimar,arvi.jonnarth,michael.felsberg,marco.kuhlmann\}@liu.se}
}

\maketitle

%%%%%%%%% ABSTRACT
\begin{abstract}

Many real-world recognition problems are characterized by long-tailed label distributions. These distributions make representation learning highly challenging due to limited generalization over the tail classes. If the test distribution differs from the training distribution, e.g.\ uniform versus long-tailed, the problem of the distribution shift needs to be addressed. A recent line of work proposes learning multiple diverse experts to tackle this issue. Ensemble diversity is encouraged by various techniques, e.g.\ by specializing different experts in the head and the tail classes. In this work, we take an analytical approach and extend the notion of logit adjustment to ensembles to form a Balanced Product of Experts (BalPoE). BalPoE combines a family of experts with different test-time target distributions, generalizing several previous approaches. We show how to properly define these distributions and combine the experts in order to achieve unbiased predictions, by proving that the ensemble is Fisher-consistent for minimizing the balanced error. Our theoretical analysis shows that our balanced ensemble requires calibrated experts, which we achieve in practice using mixup. We conduct extensive experiments and our method obtains new state-of-the-art results on three long-tailed datasets: CIFAR-100-LT, ImageNet-LT, and iNaturalist-2018. Our code is available at \url{https://github.com/emasa/BalPoE-CalibratedLT}. 

\end{abstract}

%%%%%%%%% BODY TEXT

\section{Introduction}

\begin{figure}[t]
    
    \centering
    
	\resizebox{0.8\columnwidth}{!}{
	\setlength{\tabcolsep}{0pt}
	\begin{tabular}{cccc}
		\includegraphics[width=0.235\textwidth]{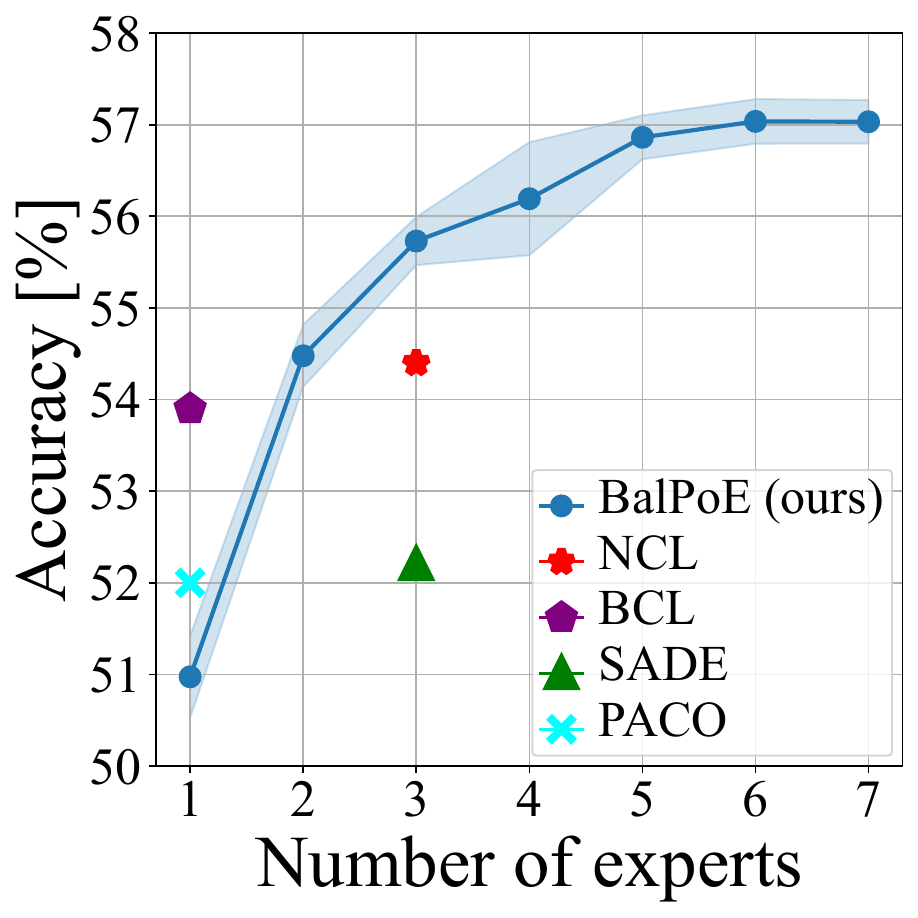} & 	
		\includegraphics[width=0.25\textwidth]{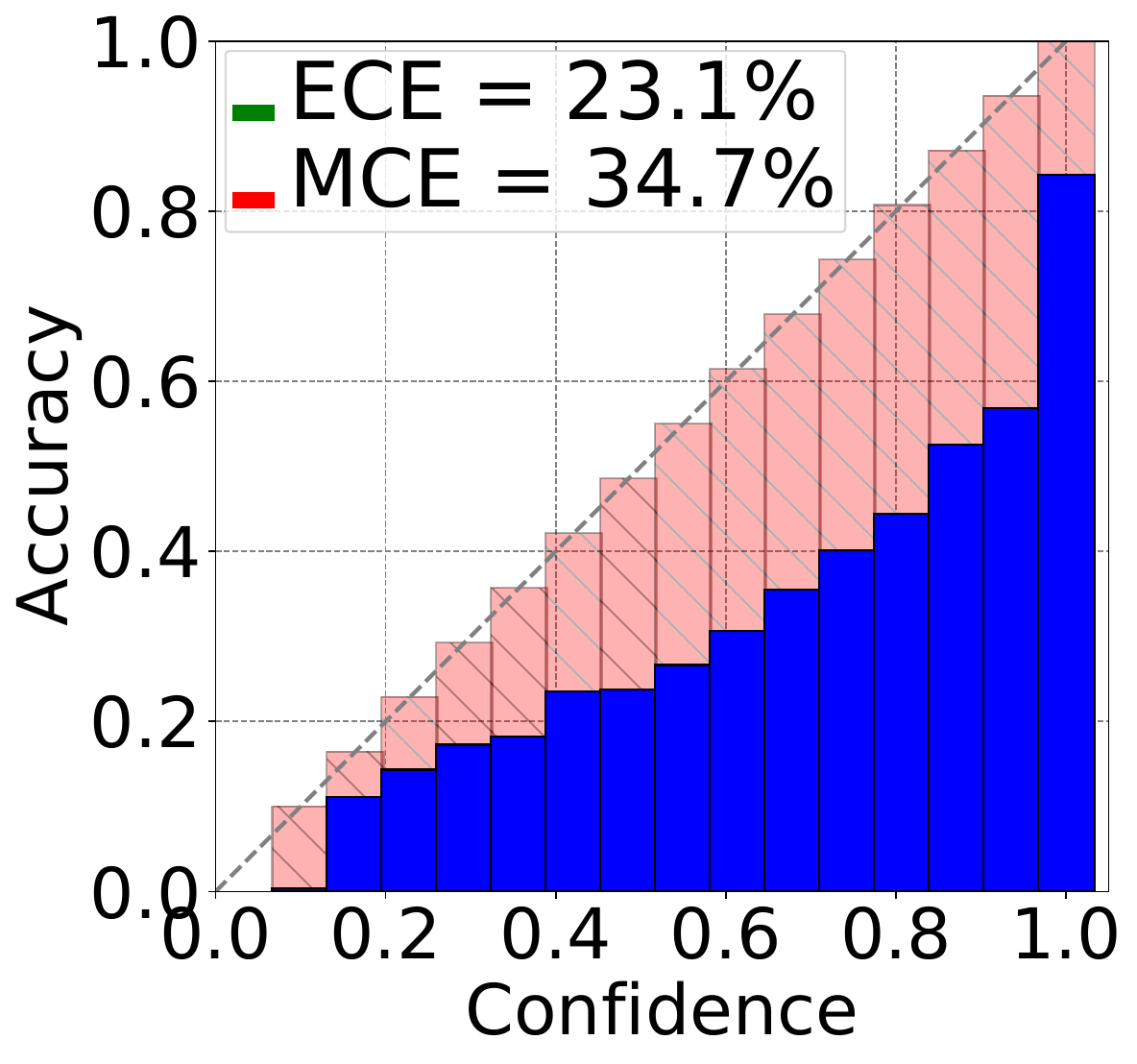} \\
		(a) SOTA comparison & (b) BS \\
		\includegraphics[width=0.25\textwidth]{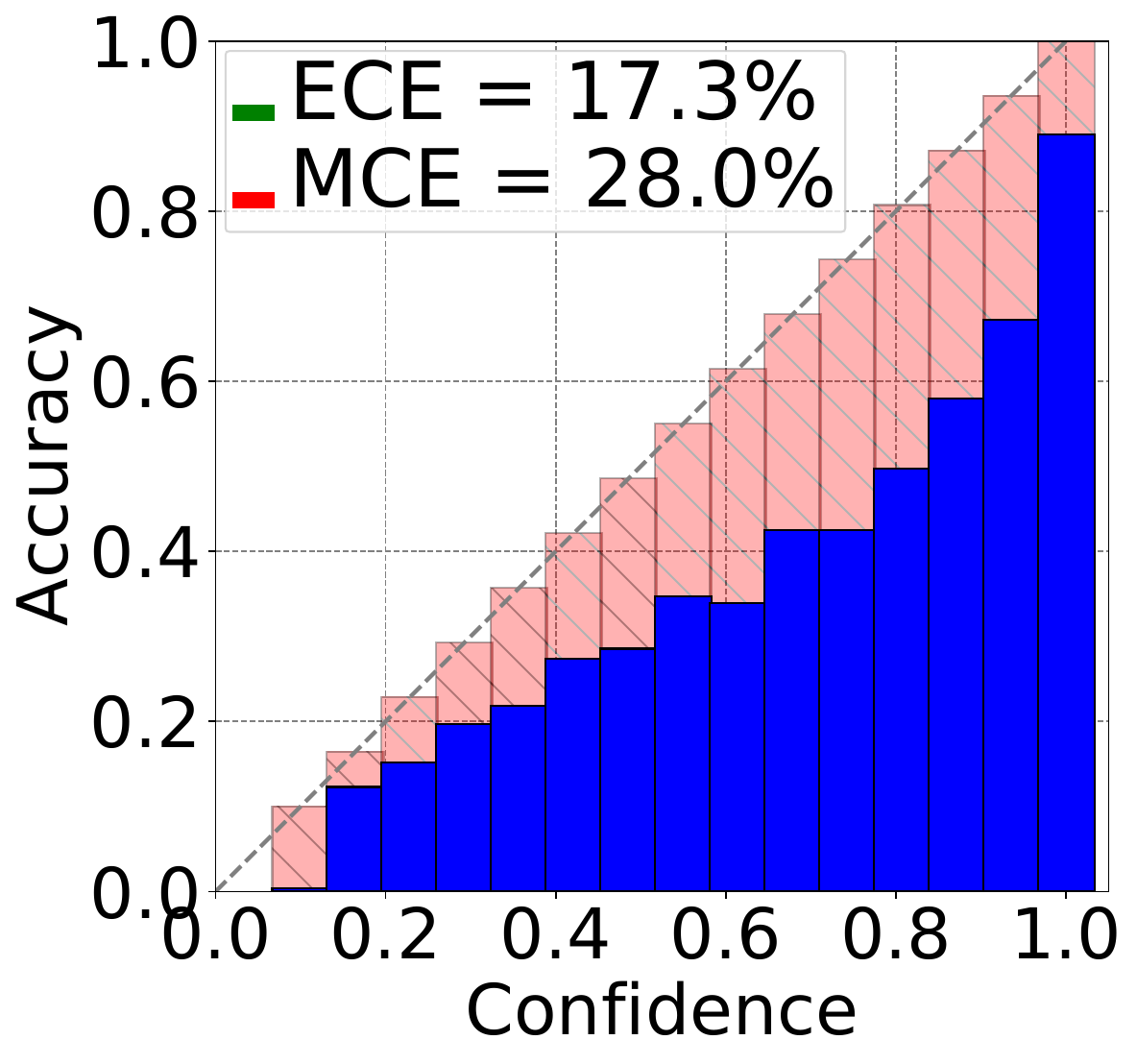} &		
		\includegraphics[width=0.25\textwidth]{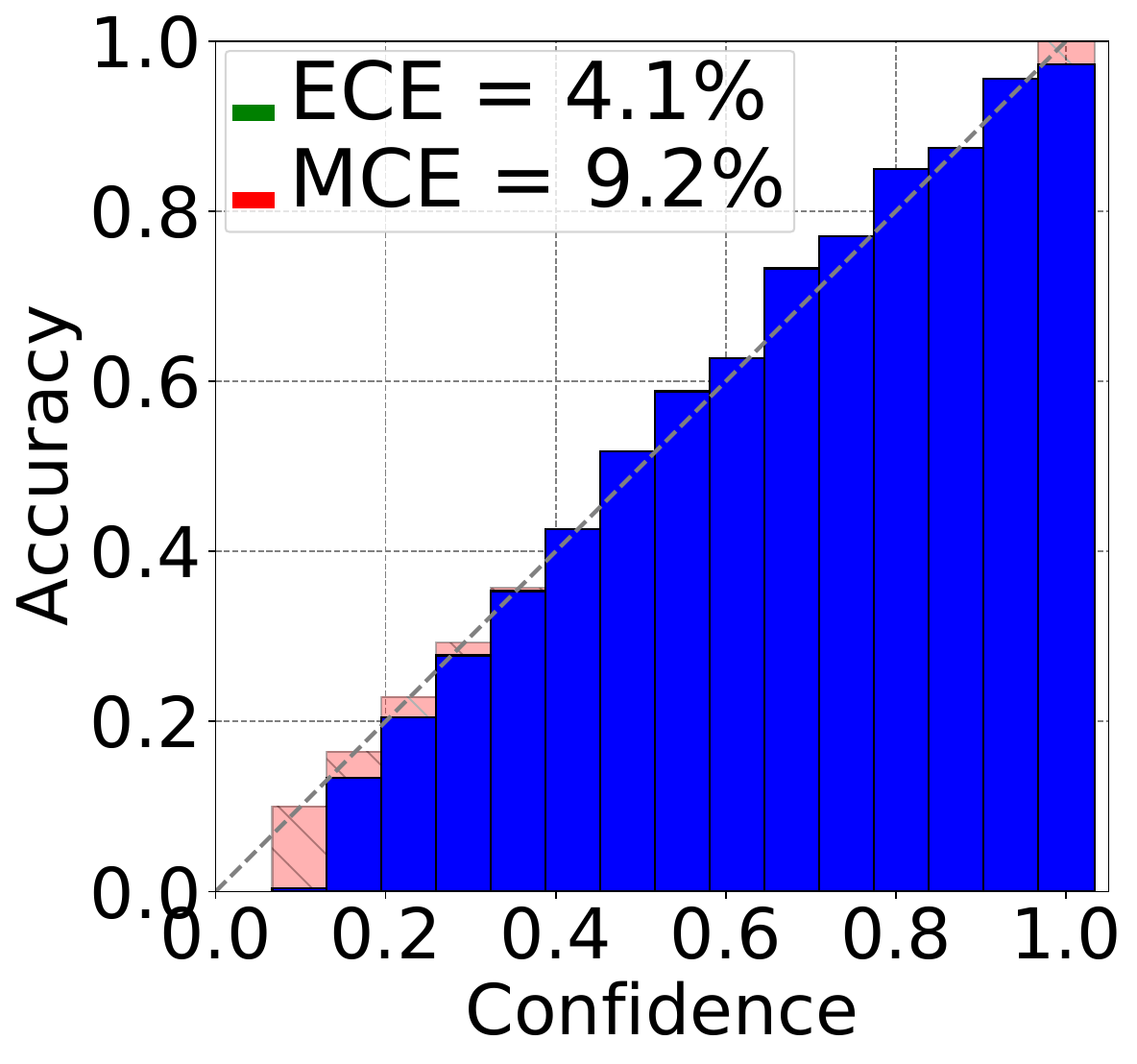} \\
		(c) SADE & (d) Our approach
	\end{tabular}
	
	}

    \vspace{-6pt}
 
	\caption{(a) SOTA comparison. Many SOTA approaches employ Balanced Softmax (BS) \cite{ren2020balanced} and its extensions to mitigate the long-tailed bias \cite{Cui2021PaCo,li2022NCL_nested_collab,zhu2022BLC_balanced_constrastive,zhang2021test}. However, we observe that single-expert bias-adjusted models, as well as multi-expert extensions, are still poorly calibrated by default. Reliability plots for (b) BS, (c) SADE \cite{zhang2021test} and (d) Calibrated BalPoE (our approach).}

        \vspace{-4pt}
 
	\label{fig:first_page}

        \vspace{-12pt}

\end{figure}

Recent developments within the field of deep learning, enabled by large-scale datasets and vast computational resources, have significantly contributed to the progress in many computer vision tasks \cite{krizhevsky2012imagenet}. However, there is a discrepancy between common evaluation protocols in benchmark datasets and the desired outcome for real-world problems. Many benchmark datasets assume a balanced label distribution with a sufficient number of samples for each class. In this setting, empirical risk minimization (ERM) has been widely adopted to solve multi-class classification and is the key to many state-of-the-art (SOTA) methods, see Figure \ref{fig:first_page}. Unfortunately, ERM is not well-suited for imbalanced or long-tailed (LT) datasets, in which \textit{head classes} have many more samples than \textit{tail classes}, and where an unbiased predictor is desired at test time. This is a common scenario for real-world problems, such as object detection \cite{lin2014coco}, medical diagnosis \cite{grzymala2004approach} and fraud detection \cite{philip1998toward}. On the one hand, extreme class imbalance biases the classifier towards head classes \cite{kang2019decoupling,tang2020causal_momentum}. On the other hand, the paucity of data prevents learning good representations for less-represented classes, especially in few-shot data regimes \cite{ye2020cdt}. Addressing class imbalance is also relevant from the perspective of algorithmic fairness, since incorporating unfair biases into the models can have life-changing consequences in real-world decision-making systems \cite{mehrabi2021survey}.

Previous work has approached the problem of class imbalance by different means, including \textit{data re-sampling} 
\cite{chawla2002oversampling-smote,buda2018systematic}, \textit{cost-sensitive learning} \cite{xie1989logit,akbani2004applying}, and \textit{margin modifications} \cite{cao2019learning,tan2020equalization}. While intuitive, these methods are not without limitations. Particularly, over-sampling can lead to overfitting of rare classes \cite{cui2019effective}, under-sampling common classes might hinder feature learning due to the omitting of valuable information \cite{kang2019decoupling}, and loss re-weighting can result in optimization instability \cite{cao2019learning}. Complementary to these approaches, ensemble learning has empirically shown benefits over single-expert models in terms of generalization \cite{kuncheva2014combining} and predictive uncertainty \cite{lakshminarayanan2017deep_ensembles} on balanced datasets. Recently, expert-based approaches \cite{wang2020experts,cai2021ace} also show promising performance gains in the long-tailed setting. However, the theoretical reasons for how expert diversity leads to better generalization over different label distributions remain largely unexplored.

In this work, we seek to formulate an ensemble where each expert targets different classes in the label distribution, and the ensemble as a whole is provably unbiased. We accomplish this by extending the theoretical background for logit adjustment to the case of learning diverse expert ensembles. We derive a constraint for the target distributions, defined in terms of expert-specific biases, and prove that fulfilling this constraint yields Fisher consistency with the \textit{balanced error}. In our formulation, we assume that the experts are calibrated, which we find not to be the case by default in practice. Thus, we need to assure that the assumption is met, which we realize using mixup \cite{zhang2017mixup}. 
Our contributions can be summarized as follows:
\begin{itemize}[leftmargin=*]
    \item We extend the notion of logit adjustment based on label frequencies to balanced ensembles. We show that our approach is theoretically sound by proving that it is \textit{Fisher-consistent} for minimizing the balanced error.
    \vspace{-2pt}
    \item Proper calibration is a necessary requirement to apply the previous theoretical result. We find that mixup is vital for expert calibration, which is not fulfilled by default in practice. Meeting the calibration assumption ensures Fisher consistency, and performance gains follow.
    \vspace{-2pt}
    \item Our method reaches new state-of-the-art results on three long-tailed benchmark datasets.
    \vspace{-2pt}
\end{itemize}

\section{Related work}

\textbf{Data resampling and loss re-weighting.} Common approaches resort to resampling the data or re-weighting the losses to achieve a more balanced distribution. Resampling approaches include under-sampling the majority classes \cite{drummond2003undersampling}, over-sampling the minority classes \cite{chawla2002oversampling-smote,han2005borderline-oversampling}, and class-balanced sampling \cite{huang2016learning,wang2017learning}. Re-weighting approaches are commonly based on the inverse class frequency \cite{huang2016learning}, the \textit{effective per-class number of samples} \cite{cui2019effective}, or sample-level weights \cite{lin2017focal}. In contrast, we make use of theoretically-sound margin modifications without inhibiting feature learning.

\textbf{Margin modifications.} Enforcing large margins for minority classes has been shown to be an effective regularizer under class imbalance \cite{cao2019learning,tan2020equalization}. Analogously, a posthoc \textit{logit adjustment} (LA) can be seen as changing the class margins during inference time to favor tail classes \cite{menon2020adjustment,tang2020causal_momentum}. Particularly, the approach proposed by Menon \etal \cite{menon2020adjustment} is shown to be \textit{Fisher-consistent} \cite{Lin2002fisher_consistency} for minimizing the balanced error. Finally, Hong \etal \cite{hong2021lade} generalize LA to accommodate an arbitrary, known target label distribution. We extend single-model LA to a multi-expert framework by parameterizing diverse target distributions for different experts and show how to formulate the margins such as to maintain Fisher consistency for the whole ensemble.

\textbf{Calibration.} Modern over-parameterized neural networks are notorious for predicting uncalibrated probability estimates \cite{guo2017calibration}, being wrongly overconfident in the presence of out-of-distribution data \cite{li2020wrongly_overconfident}. These issues are further exacerbated under class imbalance \cite{zhong2021mislas_mixup}. Mixup \cite{zhang2017mixup} and its extensions \cite{verma2019manifold,yun2019cutmix} have been effective at improving confidence calibration \cite{thulasidasan2019mixup_calibration}, and to some extent, generalization \cite{zhang2017mixup,yun2019cutmix}, in the balanced setting. However, mixup does not change the label distribution \cite{xu2021bayias}, thus several methods modify sampling \cite{xu2021bayias,park2022cmo_cutmix_lt} and mixing components \cite{xu2021bayias,chou2020remix} to boost the performance on tail classes. Zhong \etal \cite{zhong2021mislas_mixup} investigate the effect of mixup for \textit{decoupled learning} \cite{kang2019decoupling}, and find that although mixup can improve calibration when used for \textit{representation learning}, it hurts \textit{classifier learning}. Instead, MiSLaS relies on \textit{class-balanced sampling} and \textit{class-aware label smoothing} to promote good calibration during the second stage. Complementary to this approach, we show that \textit{mixup non-trivially improves both calibration and accuracy by a significant margin when combined with logit adjustment}, in a single stage and without the need for data re-sampling.

\textbf{Ensemble learning.} The ensemble of multiple experts, e.g.~Mixture of Experts \cite{jacobs1991mixture,jordan1994hierarchical_mixture} and Product of Experts (PoE) \cite{hinton2002PoE}, have empirically shown stronger generalization and better calibration \cite{lakshminarayanan2017deep_ensembles} over their single-expert counterpart in the balanced setting \cite{szegedy2015deep_ensemble,kurutach2018ensemble}. These benefits are typically attributed to model diversity, e.g.\ \textit{making diverse mistakes} \cite{dietterich2000ensemble_diverse_errors} or exploring different local minima in the loss landscape \cite{fort2019deep_ensemble_local_minima}. In the long-tailed setting, expert-based methods show promising results to improve the \textit{head-tail trade-off} \cite{cai2021ace}. These approaches typically promote diversity by increasing the KL divergence between expert predictions \cite{wang2020experts,li2022TLC_trustworthy} or by learning on different groups of classes \cite{xiang2020lfme,sharma2020long,cai2021ace}. However, the former methods are not tailored to address the head bias, whereas the latter present limited scalability under low-shot regimes. Closer to our work, 
Li \etal \cite{li2022NCL_nested_collab} learn an ensemble of uniform experts with a combination of \textit{balanced softmax} \cite{ren2020balanced}, nested knowledge distillation \cite{hinton2015knowledge_distillation} and contrastive learning \cite{he2020moco}. Recently, Zhang \etal \cite{zhang2021test} propose to address an unknown label distribution shift under the \textit{transductive learning paradigm}, by learning a set of three skill-diverse experts during training time, and then aggregating them with \textit{self-supervised test-time training}. We instead focus on the traditional setting, where our calibrated ensemble can consistently incorporate prior information about the target distribution if available, or by default, it is provably unbiased for minimizing the balanced error with naive aggregation.

%We generalize previous logit-adjusted expert frameworks \cite{li2022NCL_nested_collab,zhang2021test} by parameterizing the expert-specific biases such that the ensemble is both diverse and provably balanced, which extends to an arbitrary number of experts.

\section{Method}

In this section, we start by describing the problem formulation in Section \ref{sec:problem_formulation}, and give an overview of logit adjustment in Section \ref{sec:logit_adjustment}. Next, we introduce our Balanced Product of Experts in Section \ref{sec:balpoe} and describe how we meet the calibration assumption in Section \ref{sec:calibration}.

\subsection{Problem formulation}
\label{sec:problem_formulation}

In a multi-class classification problem, we are given an unknown distribution $\mathbb{P}$ over some data space $\mathcal{X} \times \mathcal{Y}$ of instances $\mathcal{X}$ and labels $\mathcal{Y} = \{ 1, 2, \dots, C\}$, and we want to find a mapping $f: \mathcal{X} \rightarrow \mathbb{R}^C$ that minimizes the misclassification error 
$ 
\mathbb{P}_{x, y} \left( y \neq \argmax_{j \in \mathcal{Y}} f_{j}(x) \right) 
$, where we denote $ f(x) \equiv [ f_{1}(x),...,f_{C}(x) ] $. In practice, given a sample $\mathcal{D} = \{x_i, y_i\}^{N}_{i=1} \sim \mathbb{P}$, we intend to minimize the empirical risk \cite{vapnik1991empirical_risk_min}, 
$ 
R_{\delta}(f) = \frac{1}{N} \sum_{i=1}^N \ell_{0/1}(y_i, f(x_i))
$,
where $n_j$ is the number of samples for class $j$, $N = \sum^{C}_{j=1} n_j$ and $\ell_{0/1}$ denotes the per-sample misclassification error, also known as $0/1$-loss. As $\ell_{0/1}$ is not differentiable, we typically optimize a surrogate loss, such as the \textit{softmax cross-entropy} (CE),
\begin{align}
    \centering
    \ell(y, f(x)) &= - \log \frac{ e^{f_y(x)} } { \sum_{j \in \mathcal{Y}} {e^{ f_j(x) } } }.
\numberthis \label{eqn:CE_1}    
\end{align}

However, in the long-tailed setting, the training class distribution $\mathbb{P}^{\mathrm{train}}(y)$ is usually highly skewed and cannot be disentangled from the data likelihood $\mathbb{P}^{\mathrm{train}}(x | y)$ by minimizing the misclassification error (or its surrogate loss) \cite{hong2021lade}.  
 Following Hong \etal \cite{hong2021lade}, hereafter we assume a \textit{label distribution shift}, i.e.\ although training and test priors may be different, the likelihood shall remain unchanged, 
\begin{align}
    \centering
    \mathbb{P}^{\mathrm{train}}(y) \neq \mathbb{P}^{\mathrm{test}}(y) \ \ \ \ \ \ \
    \mathbb{P}^{\mathrm{train}}(x | y) = \mathbb{P}^{\mathrm{test}}(x | y). 
     \label{eqn:distribution_shift_likeli}
\end{align}

A special case is when all classes are equally relevant, i.e.\ $\mathbb{P}^{\mathrm{test}}(y) \equiv \mathbb{P}^{\mathrm{bal}}(y) = \frac{1}{C}$, where a more natural metric is the balanced error rate (BER) \cite{chan1998learning,brodersen2010balanced}
\begin{align}
    \textnormal{BER}(f) \equiv 
    \frac{1}{C} \sum_{y \in \mathcal{Y}} \mathbb{P}_{x|y} \left( y \neq \argmax_{j \in \mathcal{Y}} f_{j}(x) \right),
     \label{eqn:ber}
\end{align} 
which is consistent with minimizing the misclassification error under a uniform label distribution.

\subsection{Distribution shift by logit adjustment}
\label{sec:logit_adjustment}

Under a label distribution shift, we can write the training conditional distribution as  
%\begin{align*}
   $
   \mathbb{P}^{\mathrm{train}}(y | x) 
    \propto    \mathbb{P}(x | y) \mathbb{P}^{\mathrm{train}}(y) % \numberthis  \label{eqn:distribution_shift_train_post},
$,
%\end{align*}
which motivates the use of logit adjustment to remove the long-tailed data bias  \cite{menon2020adjustment,ren2020balanced} 
\begin{align*}
\exp \left[ f_y(x) - \log \mathbb{P}^{\mathrm{train}}(y) \right] 
\propto \frac{\mathbb{P}^{\mathrm{train}}(y | x)}{\mathbb{P}^{\mathrm{train}}(y)} 
\propto \mathbb{P}^{\mathrm{bal}}(y|x),  \numberthis  \label{eqn:distribution_shift_uniform}
\end{align*}
leading to an \textit{adjusted scorer} that is \textit{Fisher-consistent} for minimizing BER \cite{menon2020adjustment}. Importantly, to obtain unbiased predictions via logit adjustment, an ideal (unadjusted) scorer must model the training conditional distribution, i.e. $ e^{f_y(x)} \propto \mathbb{P}^{\mathrm{train}}(y | x) $ \cite{menon2020adjustment}. We observe that, although this condition is unattainable in practice, a weaker but necessary requirement is \textit{perfect calibration} \cite{brocker2009perfect_calibration} for the training distribution. Hereafter we refer to the aforementioned condition as the \textit{\textbf{calibration assumption}}. This observation explicitly highlights the importance of confidence calibration to obtain unbiased logit-adjusted models.

Hong \etal \cite{hong2021lade} generalize the logit adjustment to address an arbitrary label distribution shift by observing that we can swap the training prior for a desired test prior,
\begin{align}
	\exp \left[ f_y(x) + \log \frac{\mathbb{P}^{\mathrm{test}}(y)}{\mathbb{P}^{\mathrm{train}}(y)} \right] 
    \propto \mathbb{P}^{\mathrm{test}}(y | x).
	\numberthis
    \label{eqn:distribution_shift_test_post}
\end{align}

\comment{
Furthermore, Xu \etal \cite{xu2021bayias} extend the logit adjusted loss \cite{menon2020adjustment} to accommodate a known test distribution during training time, by minimizing $
    \ell(y, f(x))  = - \log \frac{ e^{f_y(x) + \log \frac{\mathbb{P}^{\mathrm{train}}(y)}{\mathbb{P}^{\mathrm{test}}(y)} } } { \sum_{j \in \mathcal{Y}} {e^{ f_j(x) + \log \frac{\mathbb{P}^{\mathrm{train}}(j)}{\mathbb{P}^{\mathrm{test}}(j)} } } }. %\numberthis  \label{eqn:LA_loss_distribution_shift}
$
}

Based on the observation that diverse ensembles lead to stronger generalization \cite{fort2019deep_ensemble_local_minima,wang2020experts}, 
we investigate in the following section \textit{\textbf{how to learn an ensemble, which is not only diverse but also unbiased, in the sense of Fisher consistency, for a desired target distribution.}}

\subsection{Balanced Product of Experts}
\label{sec:balpoe}

We will now introduce our main theoretical contribution. We start from the logit adjustment formulation and introduce a convenient parameterization of the distribution shift, as we intend to create an ensemble of logit-adjusted experts, biased toward different parts of the label distribution.

\textbf{Parameterization of target distributions.} Let us revisit the parametric logit adjustment formulation by Menon \etal \cite{menon2020adjustment}. For $\bm{\tau} \in \mathbb{R}^C$, we observe the following:
\begin{align}
 & \argmax_{y} f_y(x) - \bm{\tau}_y \log \mathbb{P}^{\mathrm{train}}(y) \label{eqn:generalized_bs_1}  \nonumber\\
 & = \argmax_{y} f_y(x) - \log \mathbb{P}^{\mathrm{train}}(y) + \log \mathbb{P}^{\mathrm{train}}(y) ^ {1 - \bm{\tau}_y} \numberthis \\   
 & = \argmax_{y} f_y(x) + \log \frac{\mathbb{P}^{ \bm{\lambda}}(y)}{\mathbb{P}^{\mathrm{train}}(y)} , \numberthis  
 \label{eqn:generalized_bs}
 %\label{eqn:bs_as_distribution_shift_2}
\end{align}
where $\bm{\lambda}_y \equiv 1-\bm{\tau}_y$ and $\mathbb{P}^{ \bm{\lambda}}(y) \equiv \frac{\mathbb{P}^{\mathrm{train}}(y) ^ {\bm{\lambda}_y}}{ \sum_{j \in \mathcal{Y} } \mathbb{P}^{\mathrm{train}}(j) ^ {\bm{\lambda}_j}} $. Note that adjusting the scorer $f_y(x)$ with $\bm{\tau}_y \log \mathbb{P}^\mathrm{train}(y)$ in \eqref{eqn:generalized_bs_1} can be interpreted as performing a distribution shift parameterized by $\bm{\lambda}$. Particularly, the adjusted model shall accommodate for a target distribution $\mathbb{P}^{\bm{\lambda}}(y)$. As we intend to incorporate a controlled bias during training, we define the \textit{generalized logit adjusted loss} (gLA), which is simply the softmax cross-entropy of the adjusted scorer in \eqref{eqn:generalized_bs_1}
\begin{align}
    \ell_{\bm{\tau}}(y, f(x)) 
    &= - \log \frac{ e^{f_y(x) + \bm{\tau}_y \log \mathbb{P}^{\mathrm{train}}(y) } } { \sum_{j \in \mathcal{Y}} {e^{ f_j(x) + \bm{\tau}_j \log \mathbb{P}^{\mathrm{train}}(j)} } } \\ 
    &= \log \left[ 1 + \sum_{j \neq y} e^{f_j(x) - f_y(x) + \Delta^{\bm{\tau}}_{yj}} \right], \numberthis  \label{eqn:gen_LA_loss}
\centering\end{align}
where $ \Delta^{\bm{\tau}}_{yj} = \log \frac{ \mathbb{P}^{\mathrm{train}}(j)^{\bm{\tau}_j} }{ \mathbb{P}^{\mathrm{train}}(y)^{\bm{\tau}_y} } $ defines a pairwise class margin. Intuitively, by increasing the margin, the decision boundary moves away from minority classes and towards majority classes \cite{menon2020adjustment}, as illustrated in Figure~\ref{fig:expected_confidence_and_margin_example}(a). For example, by setting $\bm{\lambda}=\mathbbm{1}$ ($\bm{\tau} = 0$), we obtain the CE loss and a long-tailed distribution as a result. For $\bm{\lambda}=0$ ($\bm{\tau}=\mathbbm{1}$) we model the \textit{uniform distribution}, with the so-called \textit{balanced softmax} (BS) \cite{ren2020balanced,menon2020adjustment}. Interestingly, we obtain an \textit{inverse LT distribution} by further increasing the margin, e.g. for $\bm{\tau} = \mathbbm{2}$ ($\bm{\lambda} = -\mathbbm{1}$). Figure~\ref{fig:expected_confidence_and_margin_example}(b) depicts these scenarios for a long-tailed prior (Pareto distribution). In summary, the proposed loss provides a principled approach for learning specialized expert models and is the building block of the ensemble-based framework to be introduced in the next section.   

\begin{figure}[!t]
	\centering
	\setlength{\tabcolsep}{1pt}
	\begin{tabular}{cc}
		\includegraphics[width=0.44\columnwidth,height=0.42\columnwidth,trim=0 0 0 0,clip
		]{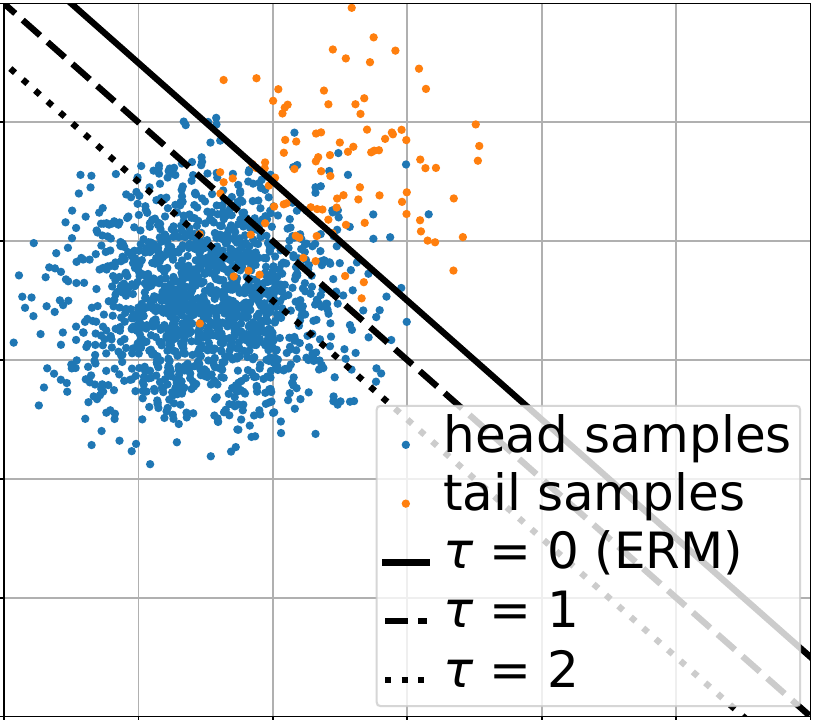} &
		\includegraphics[width=0.5\columnwidth,height=0.42\columnwidth,trim=0 0 8 4,clip] {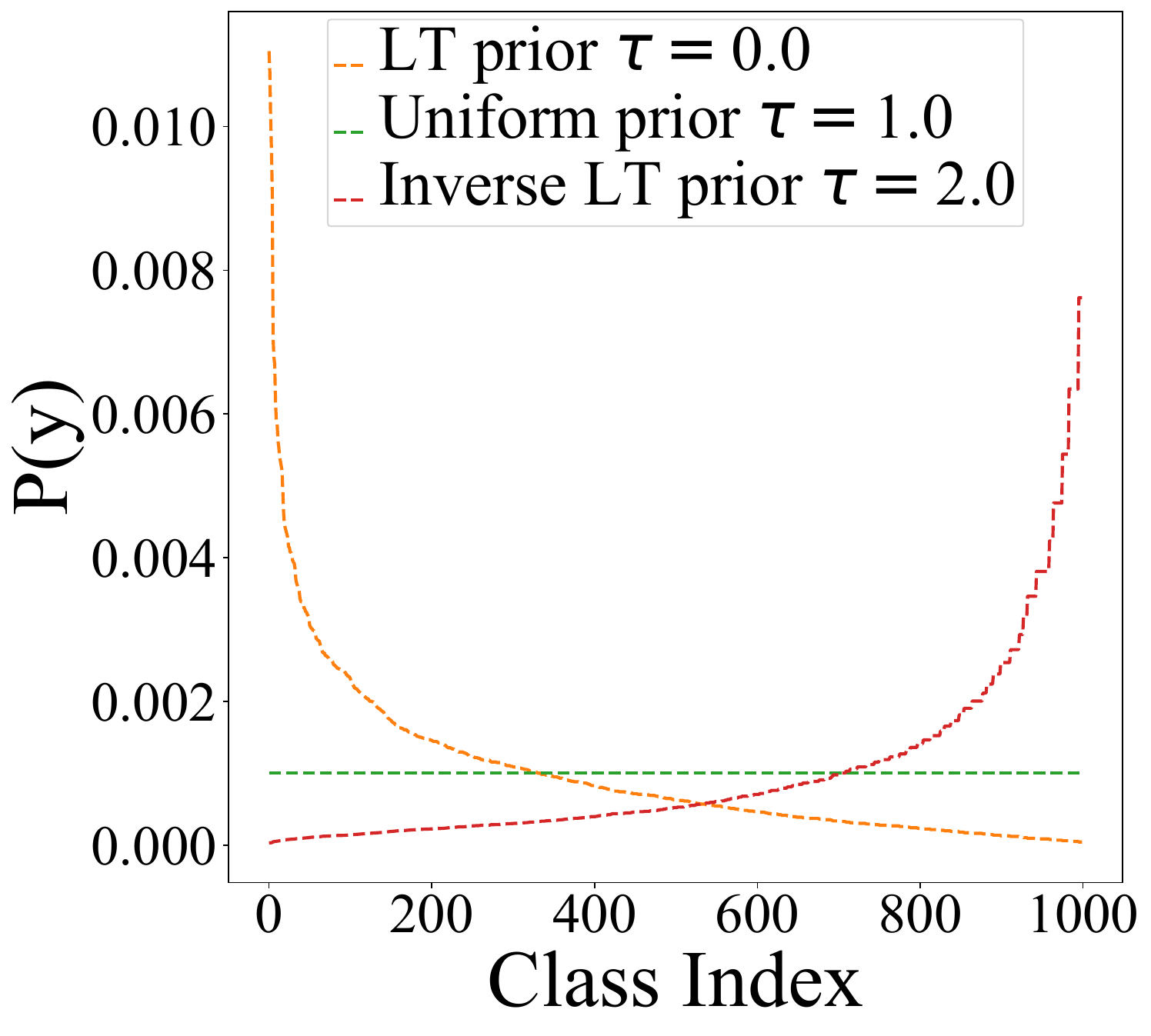} \\
		(a) &  \ \ \ \ \ \ \ \ \ (b)
	\end{tabular}
        \vspace{-6pt}
	\caption{(a) By increasing the pairwise class margin, the decision boundary smoothly moves towards the head class, resulting in (b) diverse label distributions parameterized by $\tau$.}
	\label{fig:expected_confidence_and_margin_example}
  \vspace{-12pt}
\end{figure}

\textbf{Ensemble learning.} To address the long-tailed recognition problem, we introduce  a \textbf{Bal}anced \textbf{P}roduct \textbf{o}f \textbf{E}xperts (BalPoE), which combines multiple \textit{logit-adjusted experts} in order to accommodate for a desired test distribution. Let us denote as $\mathbb{P}^{\bm{\lambda}}(x,y) \equiv \mathbb{P}(x|y) \mathbb{P}^{ \bm{\lambda}}(y)$ the joint distribution associated to $\bm{\lambda} \in \mathbb{R}^C$.

Let $S_{\lambda}$ be a multiset of $\bm{\lambda}$-vectors describing the parameterization of $|S_{\lambda}| \ge 1$ experts. Our framework is composed of a set of scorers $ \{ f^{\bm{\lambda}}\}_{ \bm{\lambda} \in S_{\lambda} } $, where each $\bm{\lambda}$-expert is (ideally) distributed according to $ \exp \left[ f^{\bm{\lambda}}_y(x) \right] \propto \mathbb{P}^{\bm{\lambda}}(x,y)$. The proposed ensemble is then defined as the average of the expert scorers in log space,
\begin{equation}
    \overline{p}{(x,y)} 
    \equiv \exp \left[ \overline{f}_y(x) \right]
    \equiv \exp \left[ \frac{1}{|S_{\lambda}|} \sum_{\bm{\lambda} \in S_{\lambda}} f^{\bm{\lambda}}_y(x) \right].
    \label{eqn:balpoe_def}
\end{equation}

Now, let us denote as $\overline{\bm{\lambda}} \equiv \frac{1}{|S_{\lambda}|} \sum_{\bm{\lambda} \in S_{\lambda}} \bm{\lambda}$ the average of all expert parameterizations. We show, in Theorem \ref{theorem:la_poe_distribution}, that an ensemble consisting of bias-adjusted experts accommodates a marginal distribution 
$\mathbb{P}^{ \overline{\bm{\lambda}}}(y)$.

\begin{theorem}[Distribution of BalPoE]
\label{theorem:la_poe_distribution}
Let $S_{\lambda}$ be a multiset of $\bm{\lambda}$-vectors describing the parameterization of $|S_{\lambda}| \ge 1$ experts. Let us assume dual sets of training and target scorer functions, $ \{ s^{\bm{\lambda}} \}_{ \bm{\lambda} \in S_{\lambda} } $ and $ \{ f^{\bm{\lambda}} \}_{ \bm{\lambda} \in S_{\lambda} } $ with $s, f: \mathcal{X} \rightarrow \mathbb{R}^C$, respectively, s.t. they are related by
\begin{equation}
    \centering
    f^{\bm{\lambda}}_y(x) \equiv s^{\bm{\lambda}}_y(x) - \log \mathbb{P}^{\mathrm{train}}(y) + \bm{\lambda}_y \log \mathbb{P}^{\mathrm{train}}(y).
\end{equation}
Assume that the \textbf{calibration assumption} holds for all training scorers, i.e.\ $\exp s^{\bm{\lambda}}_y(x) \propto \mathbb{P}^{\mathrm{train}}(y | x)$ for $\bm{\lambda} \in S_{\lambda}$. Then, under a label distribution shift in \eqref{eqn:distribution_shift_likeli}, the product of experts as defined in \eqref{eqn:balpoe_def} satisfies
\begin{equation}
\centering
\overline{p}{(x,y)} \propto 
\mathbb{P}(x|y) \mathbb{P}^{ \overline{\bm{\lambda}}}(y) \equiv 
\mathbb{P}^{\overline{\bm{\lambda}}}(x,y).
\end{equation}
\end{theorem}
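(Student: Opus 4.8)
The plan is to reduce the claim to a direct manipulation in log-space, exploiting the fact that every expert shares the same likelihood $\mathbb{P}(x\mid y)$ under the label-shift assumption \eqref{eqn:distribution_shift_likeli} and is calibrated to the same training posterior. First I would unpack the calibration assumption: for each $\bm{\lambda}\in S_{\lambda}$ there is a factor $Z_{\bm{\lambda}}(x)>0$, independent of $y$, with $\exp s^{\bm{\lambda}}_y(x)=Z_{\bm{\lambda}}(x)\,\mathbb{P}^{\mathrm{train}}(y\mid x)$. Combining this with Bayes' rule under \eqref{eqn:distribution_shift_likeli}, i.e.\ $\mathbb{P}^{\mathrm{train}}(y\mid x)\propto\mathbb{P}(x\mid y)\,\mathbb{P}^{\mathrm{train}}(y)$ (a $y$-wise proportionality whose constant is $1/\mathbb{P}^{\mathrm{train}}(x)$), and substituting into the defining relation between $f^{\bm{\lambda}}$ and $s^{\bm{\lambda}}$, the per-class prior cancels exactly once and each expert takes the form
\[
\exp\!\left[ f^{\bm{\lambda}}_y(x) \right] = C_{\bm{\lambda}}(x)\,\mathbb{P}(x\mid y)\,\mathbb{P}^{\mathrm{train}}(y)^{\bm{\lambda}_y},
\]
where $C_{\bm{\lambda}}(x)$ absorbs all $y$-independent factors.

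Next I would average the expert scorers in log-space according to \eqref{eqn:balpoe_def}. The key observation is an asymmetry between the two $y$-dependent terms: the likelihood contributes $\log\mathbb{P}(x\mid y)$ with coefficient one in every expert, so it averages back to a single $\log\mathbb{P}(x\mid y)$, whereas the prior term carries the expert-specific coefficient $\bm{\lambda}_y$, which averages to $\overline{\bm{\lambda}}_y$. Concretely,
\[
\overline{f}_y(x) = \log C(x) + \log\mathbb{P}(x\mid y) + \overline{\bm{\lambda}}_y\,\log\mathbb{P}^{\mathrm{train}}(y),
\]
where $C(x)$ is the geometric mean of the $C_{\bm{\lambda}}(x)$ and is still independent of $y$. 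Exponentiating gives $\overline{p}(x,y)\propto\mathbb{P}(x\mid y)\,\mathbb{P}^{\mathrm{train}}(y)^{\overline{\bm{\lambda}}_y}$. Finally I would invoke the definitional identity $\mathbb{P}^{\mathrm{train}}(y)^{\overline{\bm{\lambda}}_y}\propto\mathbb{P}^{\overline{\bm{\lambda}}}(y)$, since $\mathbb{P}^{\overline{\bm{\lambda}}}(y)=\mathbb{P}^{\mathrm{train}}(y)^{\overline{\bm{\lambda}}_y}/\sum_{j}\mathbb{P}^{\mathrm{train}}(j)^{\overline{\bm{\lambda}}_j}$ has a constant normalizer, to conclude $\overline{p}(x,y)\propto\mathbb{P}(x\mid y)\,\mathbb{P}^{\overline{\bm{\lambda}}}(y)=\mathbb{P}^{\overline{\bm{\lambda}}}(x,y)$.

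I do not expect a deep obstacle; the argument is essentially linear. The point demanding care is the bookkeeping of what is and is not a function of $y$: the per-expert normalizers $Z_{\bm{\lambda}}(x)$, the evidence $\mathbb{P}^{\mathrm{train}}(x)$, and their geometric mean $C(x)$ must all be confirmed $y$-independent so that they legitimately fall into the proportionality constant, and the proportionality must be read $y$-wise for fixed $x$, consistent with its usage elsewhere in the paper. The conceptual crux is the coefficient asymmetry in the averaging step, which is exactly what makes the ensemble track the averaged bias $\overline{\bm{\lambda}}$ while preserving the single shared likelihood.
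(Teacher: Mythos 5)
Your proposal is correct and takes essentially the same route as the paper's proof: both unpack the calibration assumption via a $y$-independent normalizer $Z^{\bm{\lambda}}_x$, invoke Bayes' rule together with the label-shift assumption \eqref{eqn:distribution_shift_likeli}, and exploit the linearity of the log-space average so that the likelihood term (coefficient one in every expert) survives intact while the prior exponents average to $\overline{\bm{\lambda}}_y$, with all $x$-dependent, $y$-independent factors absorbed into the proportionality constant. The only difference is cosmetic ordering --- you derive the per-expert joint form $\exp f^{\bm{\lambda}}_y(x) \propto \mathbb{P}(x|y)\,\mathbb{P}^{\mathrm{train}}(y)^{\bm{\lambda}_y}$ before averaging, whereas the paper averages the scorers first and then substitutes --- so the two arguments are algebraically identical.
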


\begin{proof}
See supplementary material.
\end{proof}

In other words, \textbf{the proposed ensemble attains the average bias of all of its experts}. We can utilize this result to construct an ensemble of diverse experts which is Fisher-consistent for minimizing balanced error. All we need to make sure of is that $\mathbb{P}^{ \overline{\bm{\lambda}}}(y)=\frac{1}{C}$ \cite{menon2020adjustment}. A simple constraint is then $\overline{\bm{\lambda}}_y = 0$ for all $y \in \mathcal{Y}$:

\begin{corollary}[Fisher-consistency]
\label{corollary:fisher_consistency_balpoe}
If $\overline{\bm{\lambda}}_y=0$ for all $y \in \mathcal{Y}$, then the BalPoE scorer $\overline{f}$ is fisher-consistent for minimizing the balanced error.
\end{corollary}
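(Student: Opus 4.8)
The plan is to reduce the corollary to a one-line specialization of Theorem~\ref{theorem:la_poe_distribution}, combined with the definition of $\mathbb{P}^{\bm{\lambda}}(y)$ and the Fisher-consistency characterization of logit adjustment already recalled in Section~\ref{sec:logit_adjustment}. First I would observe that the hypothesis $\overline{\bm{\lambda}}_y = 0$ for all $y \in \mathcal{Y}$ pins the averaged parameterization to the zero vector, and substitute it into the definition $\mathbb{P}^{\bm{\lambda}}(y) \equiv \mathbb{P}^{\mathrm{train}}(y)^{\bm{\lambda}_y} / \sum_{j \in \mathcal{Y}} \mathbb{P}^{\mathrm{train}}(j)^{\bm{\lambda}_j}$. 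Since $\mathbb{P}^{\mathrm{train}}(y)^{0} = 1$ for every class, the denominator equals $C$ and the averaged target collapses to the uniform prior, $\mathbb{P}^{\overline{\bm{\lambda}}}(y) = 1/C = \mathbb{P}^{\mathrm{bal}}(y)$. This is the only genuine computation in the argument, and it is precisely the constraint that aligns the ensemble's aggregate bias with the balanced prior demanded by the BER-optimal rule.

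Next I would apply Theorem~\ref{theorem:la_poe_distribution}, whose calibration assumption I inherit verbatim, to obtain $\overline{p}(x,y) \propto \mathbb{P}(x|y)\,\mathbb{P}^{\overline{\bm{\lambda}}}(y)$. Inserting $\mathbb{P}^{\overline{\bm{\lambda}}}(y) = 1/C$ from the previous step reduces the ensemble density to $\overline{p}(x,y) \propto \mathbb{P}(x|y)$. Applying Bayes' rule under the balanced prior, $\mathbb{P}^{\mathrm{bal}}(y|x) = \mathbb{P}(x|y)\,\mathbb{P}^{\mathrm{bal}}(y)/\mathbb{P}^{\mathrm{bal}}(x) \propto \mathbb{P}(x|y)$, where the normalizer $\mathbb{P}^{\mathrm{bal}}(x)$ does not depend on $y$, I would conclude that $\exp[\overline{f}_y(x)] = \overline{p}(x,y) \propto \mathbb{P}^{\mathrm{bal}}(y|x)$. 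Because $\exp$ is strictly monotone and the proportionality constant is constant in $y$, the class-wise argmax of the BalPoE scorer coincides with $\argmax_{y} \mathbb{P}^{\mathrm{bal}}(y|x)$, which is exactly the Bayes-optimal decision rule for the balanced error in \eqref{eqn:ber}.

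The proof then closes by invoking the Fisher-consistency result of Menon \etal \cite{menon2020adjustment}: a scorer whose induced posterior recovers $\mathbb{P}^{\mathrm{bal}}(y|x)$ minimizes BER in the Fisher-consistent sense. The main obstacle here is conceptual rather than computational. The heavy machinery lives entirely in Theorem~\ref{theorem:la_poe_distribution}, and the corollary is immediate once that theorem is available; what requires care is correctly threading the \textbf{calibration assumption}, since the guarantee is a population-level, asymptotic statement predicated on each training scorer satisfying $\exp s^{\bm{\lambda}}_y(x) \propto \mathbb{P}^{\mathrm{train}}(y|x)$. I would therefore be explicit that Fisher-consistency is inherited from the theorem's hypothesis at the population optimum, and that the constraint $\overline{\bm{\lambda}} = \bm{0}$ is exactly—and only—what forces the averaged target to the uniform prior that the BER-optimal classifier requires.
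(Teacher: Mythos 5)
Your proposal is correct and follows essentially the same route as the paper's own proof: specialize Theorem~\ref{theorem:la_poe_distribution} at $\overline{\bm{\lambda}}=\bm{0}$, observe that $\mathbb{P}^{\overline{\bm{\lambda}}}(y)=1/C$, conclude that the ensemble's argmax coincides with $\argmax_y \mathbb{P}(x|y)$, i.e.\ the Bayes-optimal rule for BER per Menon \etal~\cite{menon2020adjustment}. Your version merely makes explicit two steps the paper leaves implicit (the computation showing the parameterized prior collapses to uniform, and the Bayes'-rule identification $\mathbb{P}^{\mathrm{bal}}(y|x)\propto\mathbb{P}(x|y)$), which is a faithful elaboration rather than a different argument.
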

\begin{proof}
From Theorem \ref{theorem:la_poe_distribution}, we have that for $\overline{\bm{\lambda}} = 0$,
\begin{align}
    \argmax_y \overline{f}_y(x) 
    &= \argmax_y \log \mathbb{P}(x|y) \mathbb{P}^{ \overline{\bm{\lambda}}}(y) \\
    &= \argmax_y \log \mathbb{P}(x|y) \frac{1}{C} \\
    &= \argmax_y \mathbb{P}(x|y).
\end{align}
From (7) in \cite{menon2020adjustment}, it follows that BalPoE coincides with the \textit{Bayes-optimal rule} for minimizing the balanced error.
\end{proof}

Thus, by defining a set of $\bm{\lambda}$-experts, such that the average bias parameterization $\overline{\bm{\lambda}} = 0$, we have shown that the resulting ensemble is unbiased. Note that our framework can accommodate for any known target distribution $\mathbb{P}^{\mathrm{test}}(y)$, by ensuring $\overline{\bm{\lambda}}_y = \log \frac{\mathbb{P}^{\mathrm{test}}(y)}{\mathbb{P}^{\mathrm{train}}(y)}$.

\textbf{Training of logit-adjusted experts.} We propose to train each $\bm{\lambda}$-expert by minimizing its respective logit adjusted loss $\ell_{1-\bm{\lambda}}$. The loss of the overall ensemble is computed as the average of individual expert losses, 
\begin{equation}
    \ell^{\mathrm{total}}{(y, \overline{f}(x))} 
    = \frac{1}{|S_{\lambda}|} \sum_{\bm{\lambda} \in S_{\lambda}} \ell_{1-\bm{\lambda}}(y, f^{\bm{\lambda}}(x)).
\end{equation}
During inference, we average the logits of all experts before softmax normalization to estimate the final prediction.

Based on our theoretical observations, we argue that the calibration assumption needs to be met in order to guarantee Fisher consistency for logit-adjusted models. This is however not the case per default in practice, as can be seen in Figure \ref{fig:first_page}(b) for BS (single-expert BalPoE) and in Table \ref{tab:sota_cifar_calibration} for a three-expert BalPoE trained with ERM. Thus, we need to find an effective way to calibrate our ensemble, a matter we discuss in further detail in the next section.

\subsection{Meeting the calibration assumption}

\label{sec:calibration}

Following Theorem~\ref{theorem:la_poe_distribution}, we observe that it is desirable for all experts to be well-calibrated for their target distribution in order to obtain an unbiased ensemble. Among calibration methods in the literature, we explore mixup \cite{zhang2017mixup} for two main reasons: (1) mixup has shown to improve calibration in the balanced setting \cite{thulasidasan2019mixup_calibration} and to some extent in the long-tailed setting \cite{zhong2021mislas_mixup}. (2) mixup does not change the prior class distribution $\mathbb{P}^\mathrm{train}(y)$ \cite{xu2021bayias}, which is crucial as it would otherwise perturb the necessary bias adjustments.

Mixup samples from a \textit{vicinity distribution} $\{\tilde{x}_i, \tilde{y}_i\}^{N'}_{i=1} \sim \mathbb{P_\nu}$, where $ \tilde{x}$$=$$\xi x_i + (1 - \xi) x_j$ and $\tilde{y}$$=$$\xi y_i + (1 - \xi) y_j$
%\begin{align}
%    \centering
%\end{align}
are convex combinations of random input-label pairs, $\{(x_i, y_i), (x_j, y_j) \} \sim \mathbb{P}^{\mathrm{train}}_{x, y}$, with $\xi \sim Beta(\alpha, \alpha)$ and $\alpha \in (0, \infty)$. The model is trained by minimizing 
$R_{\nu}(f) = \frac{1}{N'} \sum_{i=1}^{N'} \ell(\tilde{y}_i, f(\tilde{x}_i))$, known as \textit{Vicinal Risk Minimization}.

Zhong \etal \cite{zhong2021mislas_mixup} found that, even though calibration is improved, mixup does not guarantee an increase in test accuracy, and could even degrade classifier performance. However, we found this not to be the case for logit-adjusted models. We argue that this is due to the fact that mixup does not change the prior class distribution $\mathbb{P}^\mathrm{train}(y)$, as proved by Xu \etal \cite{xu2021bayias}. While the long-tailed prior is typically unfavorable, and something one strives to adjust, this property is crucial for methods based on logit adjustment, such as BalPoE. We assume $\mathbb{P}^\mathrm{train}(y)$ is known, and in practice, estimate it from data. If the marginal distribution were to change, for instance by data resampling, the precomputed biases would no longer be accurate, and the ensemble would become biased. In summary, mixup is compatible with BalPoE as a means for improving calibration. 
In Section~\ref{sec:sota_calibration_comparison} we show that mixup significantly improves the calibration of our approach, leading to a \textit{Balanced Product of Calibrated Experts}, see the reliability diagram in Figure~\ref{fig:first_page}(d). We empirically observe that mixup also non-trivially improves the balanced error, which we believe is attributed to the fact that it contributes to fulfilling the calibration assumption, thus guaranteeing a balanced ensemble.

\section{Experiments}

\label{sec:all_experiments}

We validate our finding of Theorem \ref{theorem:la_poe_distribution} by measuring test accuracy when varying different aspects of the expert ensemble, including the number of experts, test-time target distributions in terms of $\lambda$, and backbone configuration. Furthermore, we measure model calibration of our expert ensemble and investigate how this is affected by mixup. Finally, we compare our method with current state-of-the-art methods on several benchmark datasets.

\subsection{Experimental setup}
\label{sec:experimental_setup}

\subsubsection{Long-tailed datasets}
\label{sec:datasets}

\textbf{CIFAR-100-LT.} Following previous work \cite{cui2019effective, cao2019learning}, a long-tailed version of the balanced CIFAR-100 dataset \cite{Krizhevsky09learningmultiple} is created, by discarding samples according to an exponential profile. The dataset contains 100 classes, and the class imbalance is controlled by the imbalance ratio (IR) $\rho = \frac{\max_{i} n_i}{\min_{i} n_i} $, i.e.\ the ratio between the number of instances for the most and least populated classes. We conduct experiments with $ \rho \in \{ 10, 50, 100 \} $. For experiments on CIFAR-10-LT \cite{Krizhevsky09learningmultiple}, see the supplementary material.

\textbf{ImageNet-LT.} Built from ImageNet-2012 \cite{deng2009imagenet} with 1K classes, this long-tailed version is obtained by sampling from a Pareto distribution with $\alpha = 6$ \cite{liu2019large}. The resulting dataset consists of 115.8K training, 20K validation, and 50K test images. The categories in the training set contain between 5 and 1280 samples, with an imbalanced ratio of $\rho=256$.

\textbf{iNaturalist.} The iNaturalist 2018 dataset \cite{Horn2017inaturalist} is a large-scale species classification dataset containing 437K images and 8142 classes, with a natural imbalance ratio of $\rho=500$.

\vspace{-4pt}

\subsubsection{Implementation details}
\label{sec:implementation_details}

For the experiments on CIFAR, following Cao \etal \cite{cao2019learning}, we train a ResNet-32 backbone \cite{he2016deep} for 200 epochs with SGD, initial learning rate (LR) of 0.1, momentum rate of 0.9, and a batch size of 128. A multi-step learning rate schedule decreases the LR by a factor of 10 at the 160th and 180th epochs. For large-scale datasets, we follow \cite{wang2020experts,zhang2021test}. For ImageNet-LT, we train ResNet-50 and ResNeXt-50 \cite{xie2017resnext} backbones for 180 epochs with a batch size of 64. For iNaturalist, we train a ResNet-50 for 100 epochs with a batch size of 512. We use a cosine annealing scheduler \cite{loshchilov2017sgdr} for ImageNet-LT and iNaturalist, with initial LR of 0.025 and 0.2, respectively. 
For mixup experiments, we set $\alpha$ to $0.4$, $0.3$, and $0.2$ for CIFAR-100-LT, ImageNet-LT and iNaturalist, respectively, unless stated otherwise. By default, we use 3 experts for BalPoE. We set $S_{\lambda}$ to $\{1,0,-1\}$, $\{1,-0.25,-1.5\}$ and $\{2,0,-2\}$, for CIFAR-100-LT, ImageNet-LT and iNaturalist, respectively, unless stated otherwise. See the supplementary material for additional details.

In addition, we conduct experiments with the longer training schedule using stronger data augmentation, following Cui \etal \cite{Cui2021PaCo}. All models are trained for 400 epochs, using RandAugment \cite{cubuk2020randaugment} in the case of ImageNet and iNaturalist, and AutoAugment \cite{cubuk2018autoaugment} for CIFAR-100-LT. In the latter case, the learning rate is decreased at epochs 320 and 360.

\subsubsection{Evaluation protocol}
\label{sec:evaluation_protocol}

Following Cao \etal \cite{cao2019learning}, we report accuracy on a balanced test set. Whenever confidence intervals are shown, they are acquired from five runs. We also report accuracy for three common data regimes, where classes are grouped by their number of samples, namely, many-shot (> $100$), medium-shot ($20$-$100$), and few-shot (< $20$). 

In addition, we investigate model calibration by estimating the expected calibration error (ECE), maximum calibration error (MCE), and visualizing reliability diagrams. See the supplementary material for definitions of these metrics.

Finally, for a more realistic benchmark, we follow \cite{hong2021lade,zhang2021test} and evaluate our approach under a diverse set of shifted target distributions, which simulate varying levels of class imbalance, from \textit{forward long-tailed} distributions that resemble the training data to extremely different \textit{backward long-tailed} cases. For more details on how these distributions are created, see the supplementary material.

\subsection{State-of-the-art comparison}

In this section, we compare our method, BalPoE, with state-of-the-art methods, in terms of accuracy and calibration, both on the uniform and non-uniform test distributions.

\subsubsection{Comparison under the balanced test distribution}
\label{sec:sota_comparison}

Table \ref{tab:sota_cifar_imagenet_inaturalist} compares accuracy under the balanced test distribution on CIFAR-100-LT, ImageNet-LT, and iNaturalist 2018. For ensemble methods, we report the performance for a three-expert model. For the standard setting, we observe significant improvements over previous methods across most benchmarks. For CIFAR-100-LT, we gain +1.5 and +2.2 accuracy points (pts) over the previous state-of-the-art for imbalance ratios 10 and 100, respectively. On ImageNet-LT, we improve by +3.6 and +1.1 for ResNet50 and ResNeXt50 backbones over RIDE and SADE, respectively, which highlights the benefits of BalPoE compared to other ensemble-based competitors that are not fisher-consistent by design. On iNaturalist, we outperform SADE by +2.1 pts and match the previous SOTA, NCL (+0.1 pts), which trains for longer (4x epochs) with strong data augmentation. Lastly, our skill-diverse ensemble considerably outperforms its single-expert counterpart (LA), while also surpassing an \textit{all-uniform ensemble} ($S_{\lambda}$$=$$\{0, 0, 0\}$) on challenging large-scale datasets.

Similar to PaCo \cite{Cui2021PaCo}, BalPoE also benefits from stronger data augmentation and longer training, commonly required by contrastive learning approaches \cite{Cui2021PaCo,zhu2022BLC_balanced_constrastive,li2022NCL_nested_collab}, setting a new state-of-the-art across all benchmarks under evaluation.

\begin{table}[t]

\centering
\caption{Test accuracy (\%) on CIFAR-100-LT, ImageNet-LT, and iNaturalist 2018 for different imbalance ratios (IR) and backbones (BB). Notation: R32=ResNet32, R50=ResNet50, RX50=ResNeXt50. DA denotes data augmentation. $\star$:  reproduced results. $^{**}$: reproduced with mixup. $\dagger$: From \cite{xu2021bayias}. $\ddagger$: From \cite{zhang2021test}.}
\label{tab:sota_cifar_imagenet_inaturalist}

\resizebox{\columnwidth}{!}{

\begin{tabular}{lcccccc}
    \toprule
    & \multicolumn{3}{c}{CIFAR-100-LT} & \multicolumn{2}{c}{ImageNet-LT} & iNat \\
    \cmidrule(r){2-4}
    \cmidrule(r){5-6}
    \cmidrule(r){7-7}
    \qquad \qquad \qquad \ \ IR $\rightarrow$ & 10 & 50 & 100 & 256 & 256 & 500 \\
    \cmidrule(r){2-4}
    \cmidrule(r){5-6}
    \cmidrule(r){7-7}
    Method $\downarrow$ \qquad BB $\rightarrow$ & R32 & R32 & R32 & R50 & RX50 & R50 \\
    \midrule
    CE$^\star$ & 57.2 & 43.9 & 38.8 & 47.2 & 48.0 & 65.2\\

    CB-Focal \cite{cui2019effective} & 58.0 & 45.3 & 39.6 & - & - & - \\    
    LDAM-DRW \cite{cao2019learning} & 58.7 & 48.0$^\dagger$ & 42.0  & 45.8$^\dagger$ & - & 68.0 \\
    BS \cite{ren2020balanced} & 59.9$^\dagger$ & 49.8$^\dagger$ & 43.9 & 51.1 & - & 68.4 \\
    LA$^{**}$ \cite{menon2013statistical} & - & - & 47.0 & - & 55.2 & 69.9 \\
    
    LADE \cite{hong2021lade} & 61.7 & 50.5 & 45.4 & - & 53.0 & 70.0 \\    
    MiSLAS \cite{zhong2021mislas_mixup} & 63.2 & 52.3 & 47.0 & 52.7 & 51.4$^\ddagger$ & 71.6 \\
    RIDE \cite{wang2020experts} & 61.8$^\ddagger$ & 51.7$^\ddagger$ & 48.0 & \underline{54.9} & 56.4 & 72.2 \\
    DiVE \cite{he2021dive} & 62.0 & 51.1 & 45.4 & 53.1 & - & 71.7 \\
    SSD \cite{li2021SSD} & 62.3 & 50.5 & 46.0 & - & 56.0 & 71.5 \\    
    DRO-LT \cite{samuel2021DRO_LT} & 63.4 & \textbf{57.6} & 47.3 & - & 53.5 & 69.7 \\
    ACE \cite{cai2021ace} & - & 51.9 & 49.6 & 54.7 & 56.6 & \underline{72.9} \\
    UniMix+Bayias \cite{xu2021bayias} & 61.3 & 51.1 & 45.5 & 48.4 & - & 69.2 \\
    CMO+BS \cite{park2022cmo_cutmix_lt} & 62.3 & 51.4 & 46.6 & - & - & 70.9 \\
    TLC \cite{li2022TLC_trustworthy} & - & - & 49.0 & 54.6 & - & - \\
    SADE \cite{zhang2021test} & \underline{63.6} & 53.9 & \underline{49.8} & - & \underline{58.8} & \underline{72.9} \\

    \textbf{Uniform BalPoE} (ours) & - & - & 52.0 & - & 59.5 & 74.7 \\
    \textbf{BalPoE} (ours) & \textbf{65.1} & \underline{56.3} & \textbf{52.0} & \textbf{58.5} & \textbf{59.8} & \textbf{75.0} \\ 
    
    \midrule
    \textit{Stronger DA} \\
    BCL \cite{zhu2022BLC_balanced_constrastive} & 64.9 & 56.6 & 51.9 & 56.0 & 57.1 & 71.8 \\
    \textbf{BalPoE} (ours) & \textbf{66.3} & \textbf{58.7} & \textbf{54.7} & \textbf{59.7} & \textbf{61.6} & \textbf{73.5} \\
    \midrule
    \textit{Longer training} \\
    PaCo \cite{Cui2021PaCo} & 64.2 & 56.0 & 52.0 & 57.0 & 58.2 & 73.2 \\
    CMO+BS \cite{park2022cmo_cutmix_lt} & \underline{65.3} & 56.7 & 51.7 & 58.0 & - & 74.0\\
    BCL \cite{zhu2022BLC_balanced_constrastive} & - & - & 53.9 & - & - & - \\
    NCL \cite{li2022NCL_nested_collab} & - & \underline{58.2} & \underline{54.2} & \underline{59.5} & 60.5 & \underline{74.9}\\
    SADE \cite{zhang2021test} & \underline{65.3} & 57.3 & 52.2 & - & \underline{61.2} & 74.5 \\

    \textbf{BalPoE} (ours) & \textbf{68.1} & \textbf{60.1} & \textbf{55.9} & \textbf{60.8} & \textbf{62.0}  & \textbf{76.9} \\
    
    \bottomrule
\end{tabular}

}

\vspace{-4pt}

\end{table}

\subsubsection{Calibration comparison}
\label{sec:sota_calibration_comparison}

\begin{table}[t]
\renewcommand{\arraystretch}{0.7}
\caption{Expected calibration error (ECE), maximum calibration error (MCE), and test accuracy (ACC) on CIFAR-100-LT-100. $\star$: reproduced results. $\dagger$: from \cite{xu2021bayias}. $\ddagger$: trained with ERM.}

\label{tab:sota_cifar_calibration}
\centering

\resizebox{0.85\columnwidth}{!}{

\begin{tabular}{lccc}
\toprule
& \multicolumn{3}{c}{CIFAR-100-LT-100} \\
\cmidrule(r){2-4}
Method $\downarrow$ & ECE $\downarrow$ & MCE $\downarrow$ & ACC $\uparrow$\\
\midrule
CE$^\star$ & 32.0{\scriptsize $\pm$0.4} & 47.3{\scriptsize $\pm$1.8} & 38.8{\scriptsize $\pm$0.6} \\
Bayias \cite{xu2021bayias} & 24.3 & 39.7 & 43.5 \\
TLC \cite{li2022TLC_trustworthy} & 22.8 & - & 49.0 \\
\textbf{BalPoE}$^\ddagger$ (ours) & 17.6{\scriptsize $\pm$0.4} & 28.9{\scriptsize $\pm$0.9} & 49.2{\scriptsize $\pm$0.5} \\

\midrule

Mixup$^\star$\cite{zhang2017mixup} & 9.6{\scriptsize $\pm$0.8} & 15.9{\scriptsize $\pm$1.5} & 40.8{\scriptsize $\pm$0.7} \\
Remix$^\dagger$\cite{chou2020remix} & 33.6 & 51.0 & 41.9 \\
UniMix+Bayias \cite{xu2021bayias} & 23.0 & 37.4 & 45.5 \\
MiSLAS \cite{zhong2021mislas_mixup} & \textbf{4.8} & - & 47.0 \\

\textbf{BalPoE} (ours) & \textbf{4.9{\scriptsize $\pm$1.0}} & \textbf{11.3{\scriptsize $\pm$1.6}} & \textbf{52.0{\scriptsize $\pm$0.5}} \\
\bottomrule
\end{tabular}

}

\vspace{-12pt}

\end{table}

In this section, we provide a comparison with previous methods for confidence calibration under the LT setting. See Table \ref{tab:sota_cifar_calibration} for ECE, MCE, and accuracy computed over CIFAR-100-LT-100 balanced test set. First, we observe that \textit{Fisher-consistent} approaches trained with ERM, such as Bayias \cite{xu2021bayias} and BalPoE, significantly improve calibration over the standard CE. In this setting, our approach achieves lower calibration errors compared to single-expert and multi-expert competitors, namely, Bayias and TLC \cite{li2022TLC_trustworthy}. Second, we notice that mixup ($\alpha=0.4$) is surprisingly effective for improving calibration under the LT setting. Although Remix \cite{chou2020remix} and UniMix \cite{xu2021bayias} improve tail performance by modifying mixup, they tend to sacrifice model calibration, as shown in Table \ref{tab:sota_cifar_calibration}. Differently from these methods, we show that the regular mixup effectively regularizes BalPoE, simultaneously boosting its generalization and calibration performance. We hypothesize that logit adjustment might benefit from a smoother decision boundary induced by mixup \cite{verma2019manifold}. Finally, although MiSLAS \cite{zhong2021mislas_mixup} leverages mixup for classifier learning, it relies on \textit{decoupled learning} \cite{kang2019decoupling} and \textit{class-aware label smoothing} for improving calibration, while our approach trains end-to-end without incurring in data re-sampling. Remarkably, BalPoE improves generalization without sacrificing confidence calibration, effectively keeping the best of both worlds. 

\subsubsection{Results under diverse test distributions}
\label{sec:sota_comparison_test_distributions}

\begin{table}[t]
\centering

\caption{Test accuracy (\%) on multiple target distributions for CIFAR-100-LT-100 and Imagenet-LT (ResNeXt50). \textit{Prior}: test class distribution is used. $*$: Prior implicitly estimated from test data by self-supervised learning. $\dagger$: results from \cite{zhang2021test}.}
\label{tab:sota_cifar_imagenet_test_distributions}

\resizebox{\columnwidth}{!}{

\begin{tabular}{lccccccccccc}
    \toprule
    & & \multicolumn{5}{c}{CIFAR-100-LT-100} & \multicolumn{5}{c}{Imagenet-LT} \\
    \cmidrule(lr){3-7}
    \cmidrule(lr){8-12}
    & & \multicolumn{2}{c}{Fwd-LT} & Uni & \multicolumn{2}{c}{Bwd-LT} & \multicolumn{2}{c}{Fwd-LT} & Uni & \multicolumn{2}{c}{Bwd-LT} \\     
    \cmidrule(lr){3-4}
    \cmidrule(lr){5-5}
    \cmidrule(lr){6-7}
    \cmidrule(lr){8-9}
    \cmidrule(lr){10-10}
    \cmidrule(lr){11-12}
    
     Method & prior & 50 & 5 & 1 & 5 & 50 & 50 & 5 & 1 & 5 & 50 \\
    \midrule
    Softmax$^\dagger$ & \ding{55} & 63.3 & 52.5 & 41.4 & 30.5 & 17.5 & 66.1  & 56.6 & 48.0 & 38.6 & 27.6 \\    
    
    MiSLAS$^\dagger$ & \ding{55} & 58.8 & 53.0 & 46.8 & 40.1 & 32.1 & 61.6 & 56.3 & 51.4 & 46.1 & 39.5 \\
    
    LADE$^\dagger$ & \ding{55} & 56.0 & 51.0 & 45.6 & 40.0 & 34.0 & 63.4 & 57.4 & 52.3 & 46.8 & 40.7 \\
    
    RIDE$^\dagger$ & \ding{55} & 63.0 & 53.6 & 48.0 & 38.1 & 29.2 & \textbf{67.6} & 61.7 & 56.3 & 51.0 & 44.0 \\
    
    SADE          & \ding{55} & 58.4 & 53.1 & 49.4 & 42.6 & 35.0 & 65.5 & 62.0 & 58.8 & 54.7 & 49.8 \\
    
    \textbf{BalPoE} & \ding{55} & \textbf{65.1} & \textbf{54.8} & \textbf{52.0} & \textbf{44.6} & \textbf{36.1} & \textbf{67.6} & \textbf{63.3} & \textbf{59.8} & \textbf{55.7} & \textbf{50.8} \\

    \midrule
    
    LADE$^\dagger$ & \checkmark & 62.6 & 52.7 & 45.6 & 41.1 & 41.6 & 65.8  & 57.5 & 52.3 & 48.8 & 49.2 \\
    
    SADE           & * & 65.9 & 54.8 & 49.8 & 44.7 & 42.4 & 69.4 & 63.0 & 58.8 & 55.5 & 53.1 \\
    
    \textbf{BalPoE} & \checkmark & \textbf{70.3} & \textbf{59.3} & \textbf{52.0} & \textbf{46.9} & \textbf{46.1} & \textbf{72.5} & \textbf{64.6} & \textbf{59.8} & \textbf{57.2} & \textbf{56.9}\\
    \bottomrule
\end{tabular}

 \vspace{-10pt}

}

\vspace{-10pt}

\end{table}

In this section, we compare model generalization with previous methods under multiple test distributions. We report test accuracy for CIFAR-100-LT-100 and ImageNet-LT in Table~\ref{tab:sota_cifar_imagenet_test_distributions}. Without any knowledge of the target distribution, our approach surpasses strong expert-based baselines, such as RIDE and SADE, in most cases, demonstrating the benefits of our Fisher-consistent ensemble. 

When knowledge about $\mathbb{P}^{\mathrm{test}}$ is available, our framework addresses the distribution shift by ensuring $\overline{\bm{\lambda}}_y = \log \frac{\mathbb{P}^{\mathrm{test}}(y)}{\mathbb{P}^{\mathrm{train}}(y)}$. For an efficient evaluation, we first train an unbiased ensemble and then perform post-hoc logit adjustment to incorporate the test bias \cite{hong2021lade}, instead of training a specialized ensemble for each target distribution. For CIFAR-100-LT, BalPoE surpasses SADE by +2.2 pts under the uniform distribution and by nearly +4.0 pts in the case of highly skewed forward and backward distributions. For the challenging ImageNet-LT benchmark, BalPoE outperforms SADE by similar margins, with a +5.1 and +3.8 pts difference for forward50 and backward50, respectively. See the supplementary material for additional results on CIFAR-100-LT, ImageNet-LT, and iNaturalist datasets.

\subsection{Ablation study and further discussion}

\begin{figure}[!t]
	\centering
	\begin{minipage}[t]{\columnwidth}\vspace{0pt}
	\begin{tabular}{cc}
		\includegraphics[height=0.42\textwidth]{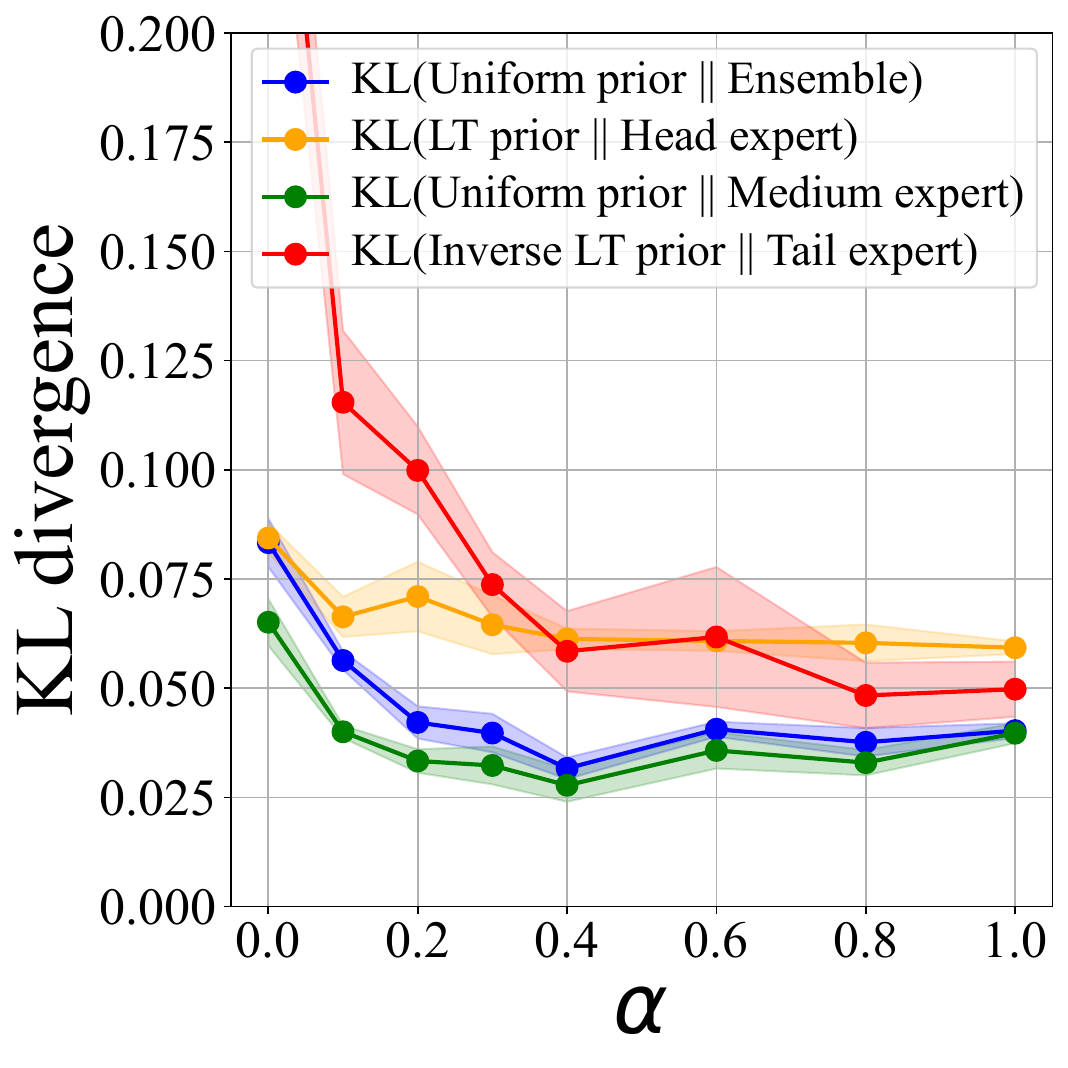} &	
		\includegraphics[height=0.41\textwidth]{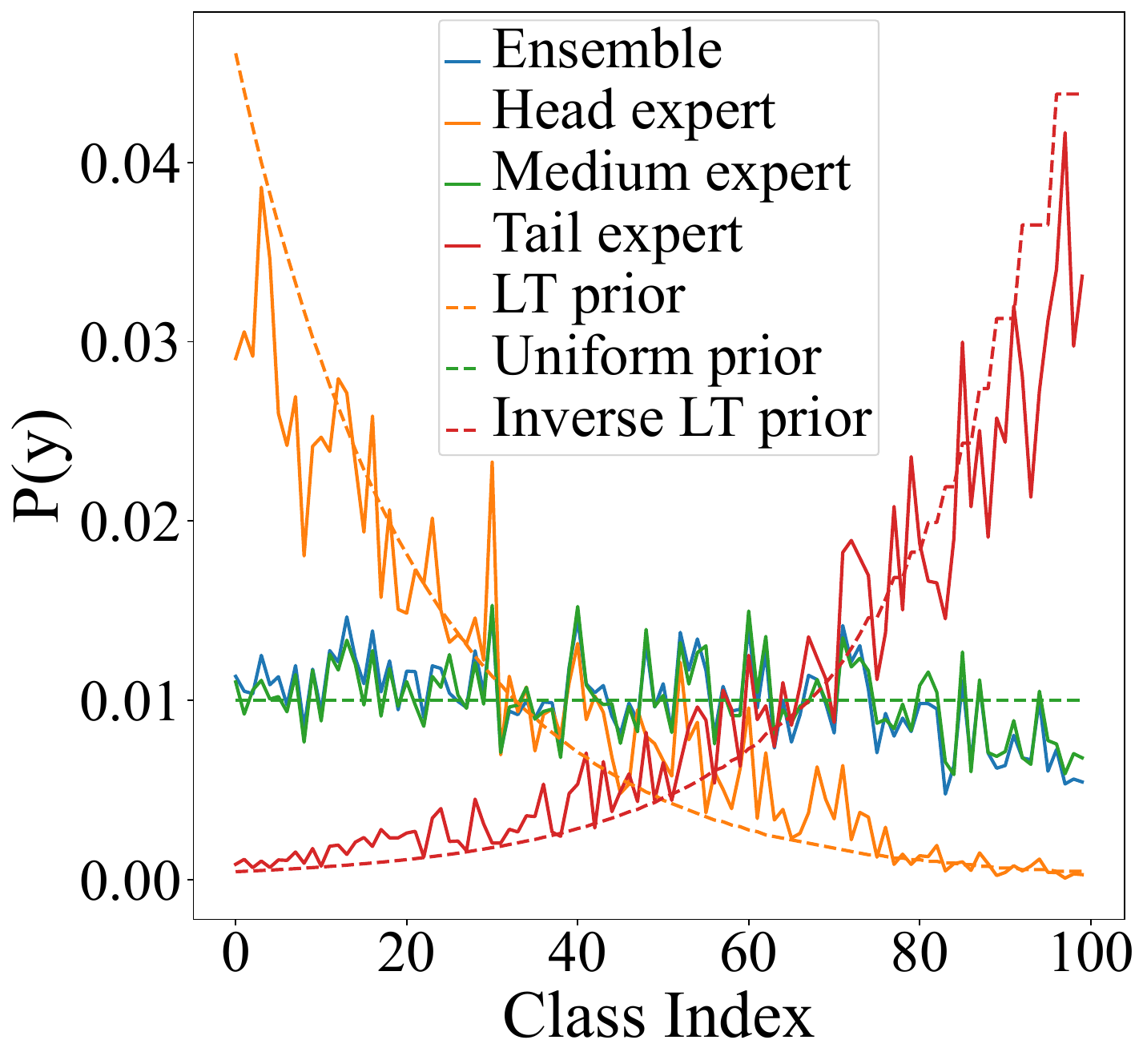} \\
		\ \ \ \ \ \ \ (a) & \ \ \ \ \ (b)
	\end{tabular}

        \vspace{-6pt}
  
	\caption{(a) KL divergence of target priors vs expected marginal. (b) Target priors vs expected marginals. Marginals estimated by averaging predictions over CIFAR-100-LT-100 test set.}
	\label{fig:kl_div_and_expected_confidence}

     \vspace{-4pt}
 
	\end{minipage}

     \vspace{-12pt}
\end{figure}

\textbf{Effect of mixup.}
As seen in Figure \ref{fig:first_page}, mixup reduces the calibration error for logit-adjusted models. We further investigate the effect of mixup by comparing the expected prior (estimated from data) to the ideal prior of each expert. We train BalPoE by setting $S_{\lambda}$ to $\{1, 0, -1\}$ on CIFAR-100-LT-100, and vary the mixup parameter $\alpha$. Figure \ref{fig:kl_div_and_expected_confidence}(a) shows the KL divergence between expert-specific biases, estimated by averaging the predictive confidence of a given $\lambda$-expert over the balanced test data, against the corresponding parametric prior $\mathbb{P}^{\lambda}(y)$. The prior of the ensemble is compared to the uniform distribution. We find that mixup regularization decreases the divergence for all experts, as well as for the ensemble up to $\alpha=0.4$, where the divergence attains its minimum for the uniform distribution. This provides further evidence for the fact that well-regularized experts are better calibrated for their respective target distributions. Figure \ref{fig:kl_div_and_expected_confidence}(b) shows that estimated marginal distributions are reasonable approximations of the ideal priors for $\alpha=0.4$.

\textbf{Do we need a \textit{balanced ensemble}?} To verify the validity of Corollary \ref{corollary:fisher_consistency_balpoe} in practical settings, we vary the average bias $\overline{\lambda}$ and report the test error in Figure \ref{fig:calib_vs_performance_errors}(a). As expected, the optimal choice for $\overline{\lambda}$ is near zero, particularly for well-regularized models. For ERM ($\alpha=0$), the best $\overline{\lambda}$ is -0.25. In this case, the calibration assumption might be violated, thus unbiased predictions cannot be guaranteed.

\textbf{Effect of the number of experts.} We investigate the scalability of our approach in Figure \ref{fig:number_of_experts}, where we plot accuracy over the number of experts. We set different $\lambda$ configurations equidistantly spaced, with minimum and maximum values at $-1.0$ and $1.0$ respectively, and $\lambda=0$ for the single-expert case. This ensures that the average is $\overline{\lambda}=0$. We observe an increase in accuracy as more experts are added to the ensemble. As shown in Figure \ref{fig:first_page}(a), a two-expert BalPoE is a cost-effective trade-off, which surpasses contrastive-learning approaches, e.g. BCL and PaCo, as well as more complex expert approaches, such as NCL and SADE, which rely on knowledge distillation and test-time training, respectively. Interestingly, the performance for even-numbered ensembles interpolates seamlessly, which indicates that a \textit{uniform specialist} is not strictly required. Our approach achieves a peak performance of 57\% for seven experts, which represents a relative increase of 11\% over its single-expert counterpart. Notably, BalPoE provides tangible benefits for many-, medium-, and especially, few-shot classes.

\textbf{Connection to recent approaches.} Our framework generalizes several recent works based on logit adjustment for single-expert models \cite{menon2020adjustment,ren2018learning,hong2021lade,xu2021bayias}. LADE \cite{hong2021lade} introduces a regularizer based on the Donsker-Varadhan representation, which empirically improves single-expert calibration for the uniform distribution, whereas we use mixup regularization to meet the calibration assumption. Within our theoretical framework, we observe that SADE \cite{zhang2021test}, which learns forward, uniform, and backward experts (with different softmax losses), is neither well-calibrated (see Figure \ref{fig:first_page}(c)) nor guaranteed to be Fisher-consistent, as in general, a backward bias (based on flipped class probabilities) does not cancel out with a forward bias, particularly without test-time aggregation. Finally, NCL \cite{li2022NCL_nested_collab} learns an ensemble of uniform experts with a combination of balanced softmax, contrastive learning, and nested distillation, but does not enforce the calibration assumption required to achieve Fisher consistency. 

\begin{figure}[!t]
	\centering
	\begin{minipage}[t]{\columnwidth}\vspace{0pt}
        \centering
        \begin{tabular}{cc}
		    \includegraphics[height=0.42\textwidth]{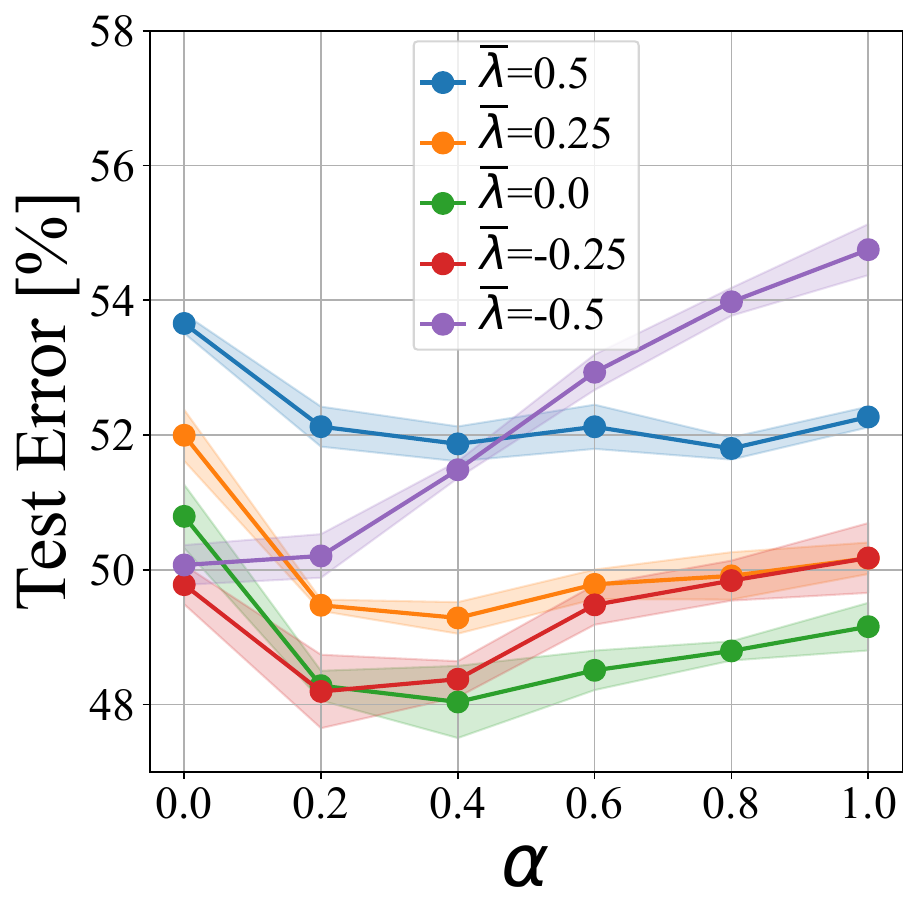} &
		    \includegraphics[height=0.42\textwidth]{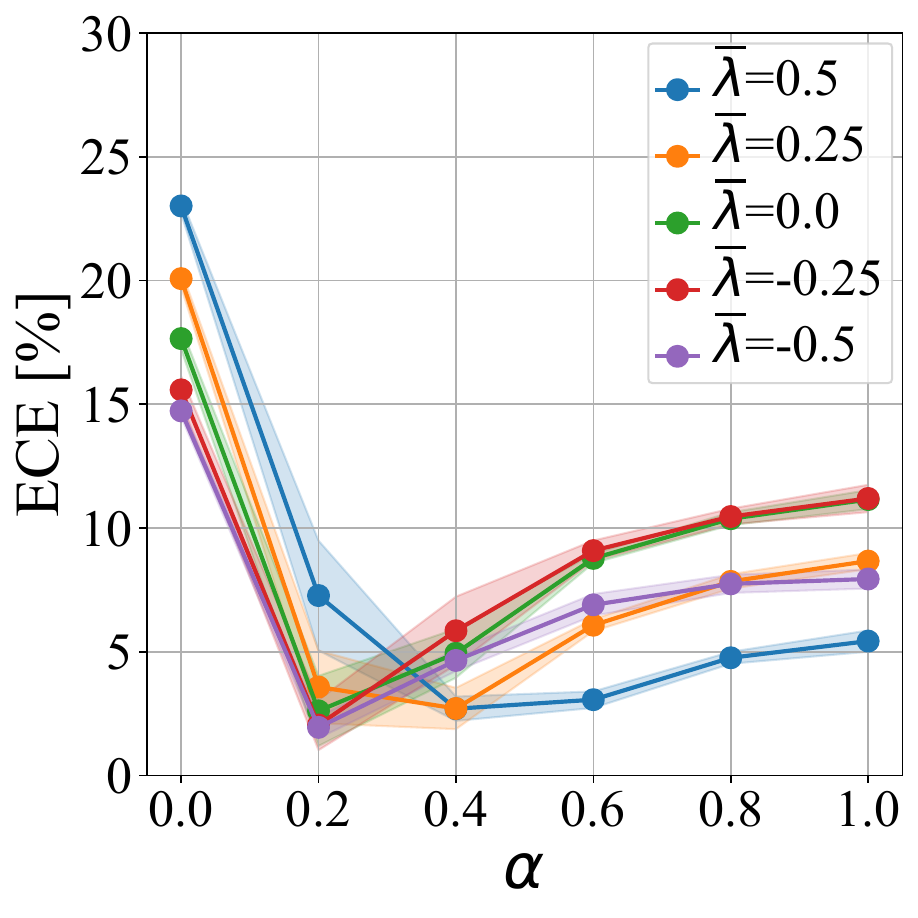} \\
		    (a) & (b)
	    \end{tabular}
            \vspace{-6pt}
	    \caption{(a) Test error and (b) expected calibration error (ECE) computed over CIFAR-100-LT-100 balanced test set.}       
	    \label{fig:calib_vs_performance_errors}
    
    \end{minipage}%
    \vspace{-12pt}
\end{figure}

\begin{figure}
    \setlength{\tabcolsep}{0pt}
	\centering
	\begin{tabular}{cccc}
		\includegraphics[height=0.2\textwidth]{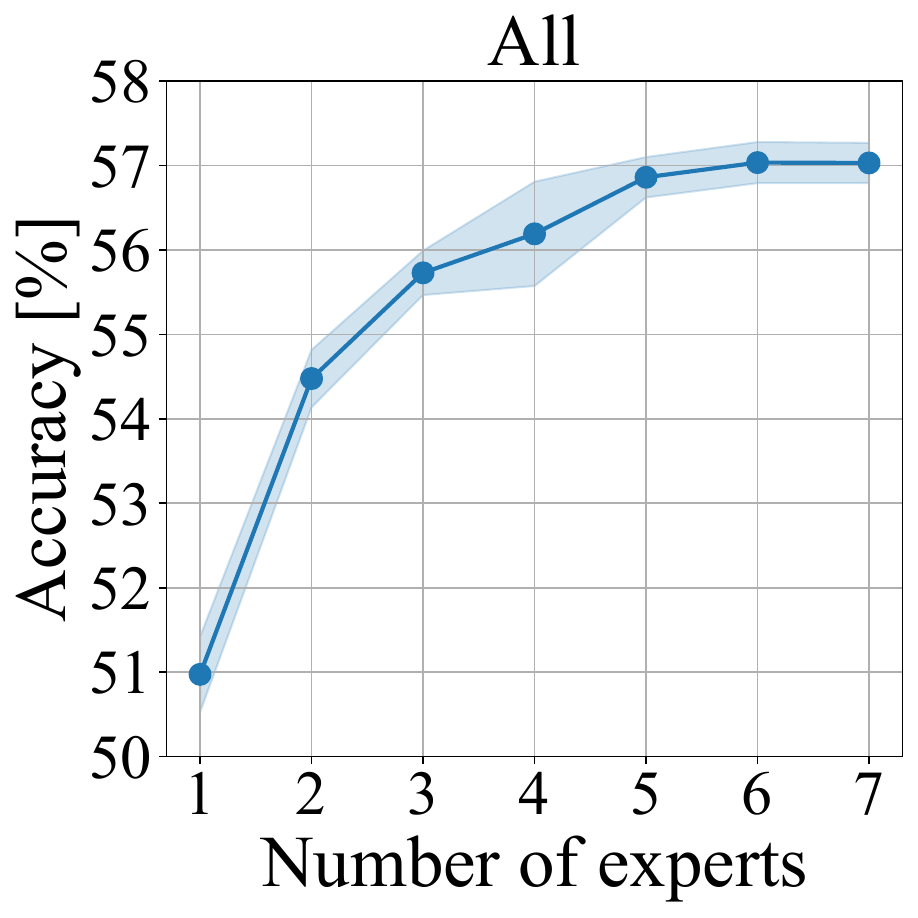} &
		\includegraphics[height=0.2\textwidth
		%,trim=30 0 0 0,clip
		]{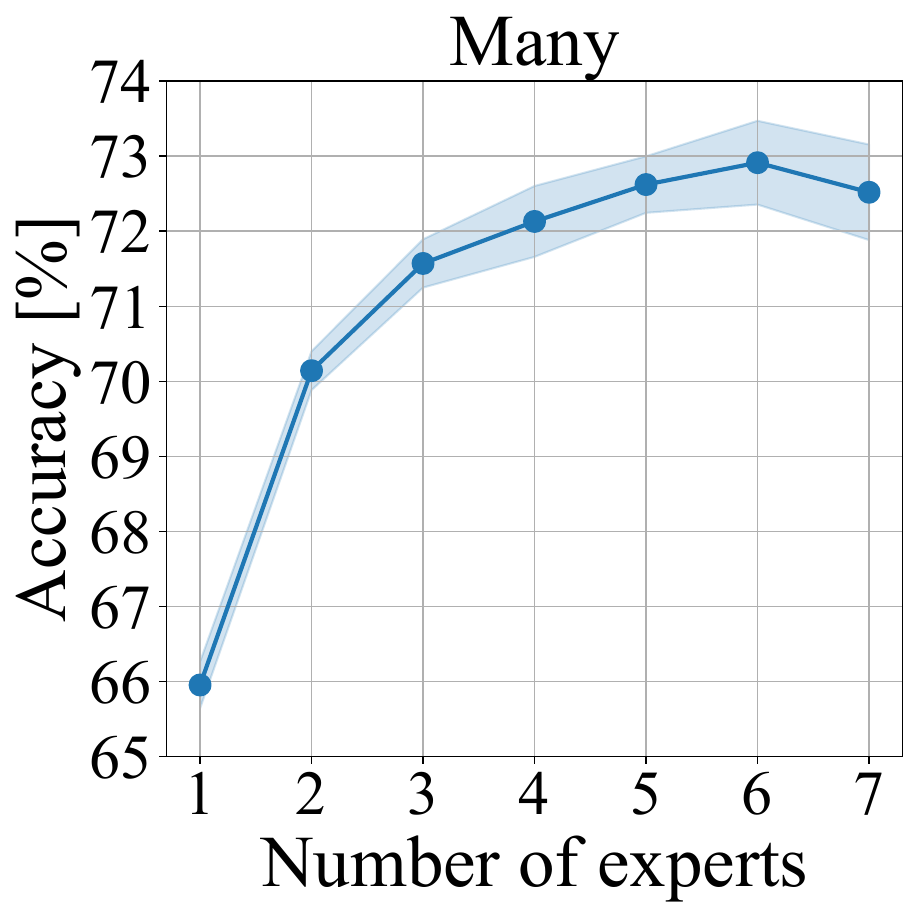} \\
		\includegraphics[height=0.2\textwidth%,trim=30 0 0 0,clip
		]{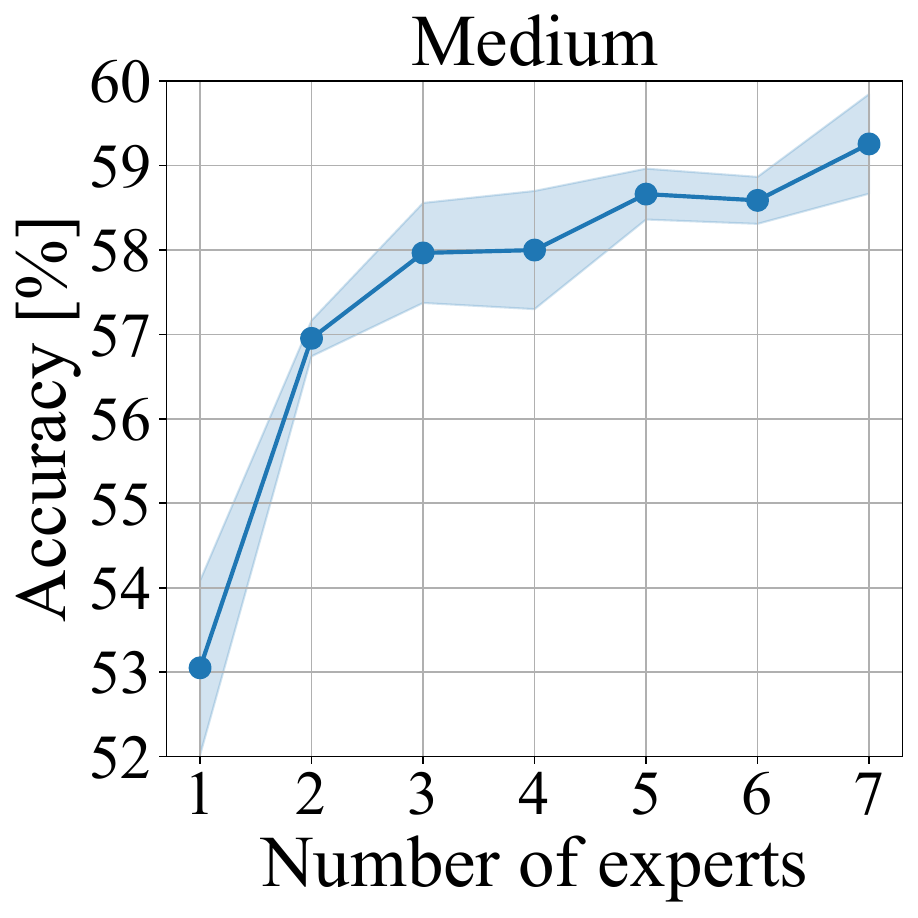} &
		\includegraphics[height=0.2\textwidth%,trim=30 0 0 0,clip
		]{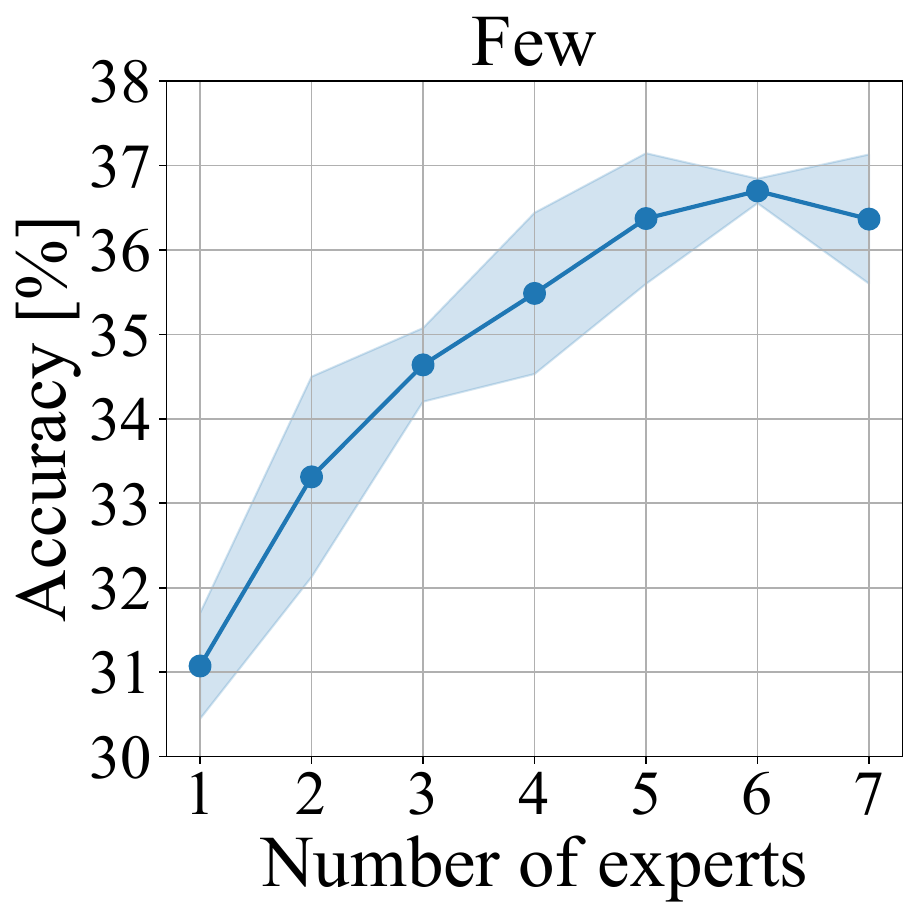} \\		
	\end{tabular}

        \vspace{-6pt}
        
	\caption{Test accuracy vs number of experts on CIFAR-100-LT-100 for all, many-shot, medium-shot and few-shot classes.}

 \vspace{-6pt}

\label{fig:number_of_experts}

 \vspace{-10pt}
 
\end{figure}

\vspace{-2pt}

\section{Conclusion}

\label{sec:discussion}
In this paper, we extend the theoretical foundation for logit adjustment to be used for training a balanced product of experts (BalPoE). We show that the ensemble is Fisher-consistent for the balanced error, given that a constraint for the expert-specific biases is fulfilled. We find that model calibration is vital for achieving an unbiased ensemble since the experts need to be weighed against each other in a proper way. This is achieved using mixup. Our BalPoE sets a new state-of-the-art on several long-tailed benchmark datasets.

\textbf{Limitations.} First, we assume $\mathbb{P}^{\mathrm{train}}(x|y)=\mathbb{P}^{\mathrm{test}}(x|y)$, which is a fair assumption but may be violated in practice, e.g. in autonomous driving applications, where the model might be exposed to out-of-distribution data.
Second, the prior $\mathbb{P}^{\mathrm{train}}(y)$ is estimated empirically based on the number of training samples, which can be suboptimal for few-shot classes. To address this issue, considering the effective number of samples \cite{cui2019effective} could be an interesting venue for future research. Finally, our findings are limited to single-label multi-class classification, and extending our balanced product of experts to multi-label classification and other detection tasks is left to future work.

\textbf{Societal impact.} We believe that the societal impacts of this work are mainly positive. Our proposed method reduces biases caused by label imbalance in the training data, which is important from an algorithmic fairness point of view. Additionally, our model is calibrated, which improves trustworthiness, and usefulness in downstream tasks. However, it cannot account for out-of-distribution data or cases where a view of a tail class appears at test time, which is not captured in the training data. Thus, there is a risk that the model is being overtrusted. Finally, we utilize multiple models, which increases computational cost and electricity consumption, especially for large-scale datasets.

\section{Acknowledgements}
This work was supported by the Wallenberg Artificial Intelligence, Autonomous Systems and Software Program (WASP), funded by Knut and Alice Wallenberg Foundation. The computational resources were provided by the National Academic Infrastructure for Supercomputing in Sweden (NAISS), partially funded by the Swedish Research Council through grant agreement no. 2022-06725, and by the Berzelius resource, provided by the Knut and Alice Wallenberg Foundation at the National Supercomputer Centre.

%%%%%%%%% REFERENCES
{\small
\bibliographystyle{ieee_fullname}
\bibliography{egbib}
}

%%%%%%%%%%%%%%%%%%%%%%%%%%%%%%%%%%%%%%%%%%%%%%%%%%%%%%%%%%%%

\onecolumn
\clearpage
\newpage

\appendix

\vspace{-20pt}
\begin{center}
\textbf{\Large Supplementary Material for}
\end{center}
\begin{center}\textbf{\Large Balanced Product of Calibrated Experts for Long-Tailed Recognition}
\end{center}

\section{Theoretical results}

\paragraph{Theorem 1 (Distribution of BalPoE)}  Let $S_{\lambda}$ be a multiset of $\bm{\lambda}$-vectors describing the parameterization of $|S_{\lambda}| \ge 1$ experts. Let us assume dual sets of training and target scorer functions, $ \{ s^{\bm{\lambda}} \}_{ \bm{\lambda} \in S_{\lambda} } $ and $ \{ f^{\bm{\lambda}} \}_{ \bm{\lambda} \in S_{\lambda} } $ with $s, f: \mathcal{X} \rightarrow \mathbb{R}^C$, respectively, s.t. they are related by
\begin{equation}
    \centering
    f^{\bm{\lambda}}_y(x) \equiv s^{\bm{\lambda}}_y(x) - \log \mathbb{P}^{\mathrm{train}}(y) + \bm{\lambda}_y \log \mathbb{P}^{\mathrm{train}}(y).
    \label{eqn:logit_adjustment_assumption}
\end{equation}
Assume that the \textbf{calibration assumption} holds for all training scorers, i.e. 
\begin{equation}
\centering
\exp s^{\bm{\lambda}}_y(x) \propto \mathbb{P}^{\mathrm{train}}(y | x)  \ \ \ \ \forall \bm{\lambda} \in S_{\lambda}.
\label{eqn:perfect_training_assumption_propto}
\end{equation}
Then, under a label distribution shift, our product of experts satisfies
\begin{equation}
\centering
\overline{p}{(x,y)} \propto 
\mathbb{P}(x|y) \mathbb{P}^{ \overline{\bm{\lambda}}}(y) \equiv 
\mathbb{P}^{\overline{\bm{\lambda}}}(x,y).
\end{equation}

\paragraph{Proof of Theorem 1}
Given $x \in \mathcal{X}$ and $y \in \mathcal{Y}$, for each $\bm{\lambda} \in S_{\lambda}$ and its respective (training) scorer $s^{\bm{\lambda}}$, we have that
\begin{equation}
\mathbb{P}^{\mathrm{train}}(y | x) = \frac{\exp s^{\bm{\lambda}}_y(x) }{Z^{\bm{\lambda}}_x},  \ \ \ \ \ \ \ \ \ \ \ \ \ \ \ \ \ \ \ \ \ \ \ \  \eqref{eqn:perfect_training_assumption_propto}
\label{eqn:perfect_training_assumption}
\end{equation}
where $Z^{\bm{\lambda}}_x \in \mathbb{R}$ is an (unknown) normalizing factor. Then, our mean scorer $\overline{f}$ satisfies
\begin{align}
    \overline{f}_y(x)
    &= \frac{1}{|S_{\lambda}|} \sum_{\bm{\lambda} \in S_{\lambda}} \left[ s^{\bm{\lambda}}_y(x) + (\bm{\lambda}_y - 1) \log \mathbb{P}^{\mathrm{train}}(y) \right]   \ \ \ \ \ \ \ \ \ \ \ \ \ \ \ \ \ \ \ \ \ %\eqref{eqn:logit_adjustment_assumption} \\
    \eqref{eqn:balpoe_def},\eqref{eqn:logit_adjustment_assumption} \\ % FOR ARXIV
    &= \frac{1}{|S_{\lambda}|} \sum_{\bm{\lambda} \in S_{\lambda}} s^{\bm{\lambda}}_y(x) + \left[ \frac{1}{|S_{\lambda}|} \sum_{\bm{\lambda} \in S_{\lambda}} \bm{\lambda}_y - 1 \right] \log \mathbb{P}^{\mathrm{train}}(y) \\ 
    &= \frac{1}{|S_{\lambda}|} \sum_{\bm{\lambda} \in S_{\lambda}} \log \left[  \mathbb{P}^{\mathrm{train}}(y|x) Z^{\bm{\lambda}}_x \right] + (\overline{\bm{\lambda}}_y - 1) \log \mathbb{P}^{\mathrm{train}}(y) \ \ \ \ \eqref{eqn:perfect_training_assumption} \\ 
    &= \log \frac{\mathbb{P}^{\mathrm{train}}(y|x)}{\mathbb{P}^{\mathrm{train}}(y)} + \log \mathbb{P}^{\mathrm{train}}(y)^{\overline{\bm{\lambda}}_y} + \frac{1}{|S_{\lambda}|} \sum_{\bm{\lambda} \in S_{\lambda}} \log Z^{\bm{\lambda}}_x \\
    &= \log \mathbb{P}^{\mathrm{train}}(x|y) + \log \mathbb{P}^{\overline{\bm{\lambda}}}(y) + \overline{C}^{\bm{\lambda}}_x \ \ \ \ \ \ \ \ \ \ \ 
    \textnormal{(see definition of } \overline{C}^{\bm{\lambda}}_x \textnormal{ below)} \\
    &= \log \left[ \mathbb{P}(x|y) \mathbb{P}^{\overline{\bm{\lambda}}}(y) \right]  + \overline{C}^{\bm{\lambda}}_x \ \ \ \ \ \ \ \  
    \\
    %\eqref{eqn:distribution_shift_likeli} \\
    &= \log \mathbb{P}^{\overline{\bm{\lambda}}}(x, y) + \overline{C}^{\bm{\lambda}}_x, \ \ \ \ \ \ \ \ \ \ \ \ \ \ \ \ \ 
    \eqref{eqn:distribution_shift_likeli} \\ % FOR ARXIV   
    \label{eqn:log_joint_distribution}
\end{align}
where 
$\overline{C}^{\bm{\lambda}}_x = - \log \mathbb{P}^{\mathrm{train}}(x) + \log \left[ \sum_{j \in \mathcal{Y}} \mathbb{P}^{\mathrm{train}}(j)^{\overline{\bm{\lambda}}_j } \right] + \frac{1}{|S_{\lambda}|} \sum_{\bm{\lambda} \in S_{\lambda}} \log Z^{\bm{\lambda}}_x $ 
hides terms that are constant w.r.t. $y$. By re-arranging terms in \eqref{eqn:log_joint_distribution} and applying \textit{softmax}, $\overline{C}^{\bm{\lambda}}_x$ is cancelled out, obtaining  
\begin{align}
    \frac{\mathbb{P}^{\overline{\bm{\lambda}}}(x, y)}{ \sum_{j \in \mathcal{Y}} \mathbb{P}^{\overline{\bm{\lambda}}}(x, j) } 
    &= \frac{\exp \left[ \overline{f}_y(x) - \overline{C}^{\bm{\lambda}}_x \right]} { \sum_{j \in \mathcal{Y}} \exp \left[ \overline{f}_j(x) - \overline{C}^{\bm{\lambda}}_x \right]} \\
    &= \frac{\exp \overline{f}_y(x)}{ \sum_{j \in \mathcal{Y}} \exp \overline{f}_j(x) } \\
    &= \frac{\overline{p}{(x,y)}}{ \sum_{j \in \mathcal{Y} } \overline{p}{(x,j)} }.
    %\eqref{eqn:balpoe_def}  % FOR ARXIV
    \label{eqn:normalized_poe_distribution}
\end{align}
From \eqref{eqn:normalized_poe_distribution} it follows that our BalPoE is proportional to a joint (target) distribution parameterized by $\overline{\bm{\lambda}}$, i.e. 
$\overline{p}{(x,y)} \propto \mathbb{P}^{\overline{\bm{\lambda}}}(x,y)$.

\section{Implementation details}

\subsection{Dataset summary}

In Table \ref{tab:datasets} we include additional information for the datasets used in this work.

\begin{table}[ht]
\renewcommand{\arraystretch}{0.5}
\centering
\caption{Summary of long-tailed datasets.}
\label{tab:datasets}
\centering
\begin{tabular}{lccc}
\toprule
Dataset & \# classes & \# samples & IR \\
\midrule
CIFAR-LT \cite{cao2019learning} & 10 / 100 & 60K & \{10,50,100\} \\
ImageNet-LT \cite{liu2019large} & 1K & 186K & 256 \\
iNaturalist 2018 \cite{Horn2017inaturalist} & 8K & 437K & 500 \\
\bottomrule
\end{tabular}
\end{table}

\subsection{Training details}

Following previous LT approaches \cite{cao2019learning,liu2019large}, we use cosine classifier, which is defined as $\psi(z, y) = \frac{\kappa w_y^{T}  z}{||w_y|| ||z||}$, where $w_y$ are learnable weights for class $y$, $z$ denotes the output of a neural network and $\kappa$ is a hyperparameter (set to $32$). We use weight decay with its hyperparameter set to $5 \cdot 10^{-4}$, $5 \cdot 10^{-4}$ and $2 \cdot 10^{-4}$ for CIFAR-LT, ImageNet-LT, and iNaturalist datasets, respectively. For the CIFAR-LT experiments, we use a warmup period of 5 and 10 epochs for standard and longer training schedules, respectively. We use (up to) four Nvidia A100 40GB GPUs to train our models in an internal cluster. %Our implementation is based on publicly available code from \cite{zhang2021test}\footnote{Code adapted from (MIT license): \url{https://github.com/Vanint/SADE-AgnosticLT}}. 
Following \cite{wang2020experts,cai2021ace}, our expert architecture comprises an extensive shared backbone and small expert heads (one and two ResNet blocks for large-scale and CIFAR experiments, respectively).

\section{Experiments}

Here we present additional experiments and an extended analysis to further validate our approach.

\subsection{Extended state-of-the-art comparison}

In this section, we provide a more detailed comparison with previous state-of-the-art approaches, by reporting test accuracy for many-, medium- and few-shot classes, separately. Tables \ref{tab:sota_cifar_regimes}, \ref{tab:sota_imagenet_regimes} and \ref{tab:sota_inaturalist_regimes} present results for CIFAR-100-LT-100, ImageNet-LT and Inaturalist, respectively. For CIFAR-100-LT-100, see Table \ref{tab:sota_cifar_regimes}, we observe that our balanced product of calibrated experts significantly improves generalization under few-shot and medium-shot regimes, with only a slight drop in head performance, effectively mitigating the elusive head-tail trade-off. Under the standard setting, we surpass all baselines in medium-, few-shot, and overall performance, while also retaining competitive performance in many-shot classes. 
As discussed in the main paper, BalPoE can effectively tackle large-scale datasets. We achieve a new state-of-the-art for few-shot, medium-shot, and overall performance for Inaturalist, see Table \ref{tab:sota_inaturalist_regimes}. Finally, for ImageNet-LT we obtain very strong results, on medium- and few-shot classes on-par with current SOTA approaches, while achieving the best head-tail trade-off in overall performance, as shown in Table \ref{tab:sota_imagenet_regimes}.

\begin{table*}[ht]
\caption{Test accuracy (\%) of ResNet-32 trained on CIFAR-100-LT-100 for methods under comparison. $\star$: Our reproduced results. $\ddagger$: ACE trained for 400 epochs with regular data augmentation.}
\label{tab:sota_cifar_regimes}
\centering
\begin{tabular}{lcccc}
\toprule
& \multicolumn{4}{c}{CIFAR-100-LT-100} \\
\cmidrule(r){2-5}
Methods & Many & Medium & Few & All \\
\midrule
CE$^\star$ & 67.6{\scriptsize $\pm$1.0} & 36.7{\scriptsize $\pm$1.2} & 7.6{\scriptsize $\pm$0.6} & 38.8{\scriptsize $\pm$0.6}  \\
LDAM-DRW \cite{cao2019learning} & - & - & - & 39.6 \\
BS \cite{ren2020balanced} & 59.5 & 45.4 & 30.7 & 46.1 \\
LADE \cite{hong2021lade} & 58.7 & 45.8 & 29.8 & 45.6 \\
MiSLAS \cite{zhong2021mislas_mixup} & 60.4 & 49.6 & 26.6 & 47.0 \\
RIDE \cite{wang2020experts} & 68.1 & 49.2 & 23.9 & 48.0 \\
UniMix+Bayias \cite{xu2021bayias} & - & - & - & 48.4 \\
DRO-LT \cite{samuel2021DRO_LT} & 64.7 & 50.0 & 23.8 & 47.3 \\
TLC \cite{li2022TLC_trustworthy} & \textbf{70.9} & 47.9 & 28.1 & 49.0 \\
SADE \cite{zhang2021test} & 65.4 & 49.3 & 29.3 & 49.8 \\
\textbf{BalPoE} (ours) & 67.7{\scriptsize $\pm$0.3} & \textbf{54.2{\scriptsize $\pm$0.9}} & \textbf{31.0{\scriptsize $\pm$0.6}} & \textbf{52.0{\scriptsize $\pm$0.5}}  \\
\midrule
\textit{Longer training} \\
\midrule
ACE$^\ddagger$ \cite{cai2021ace} & 66.1 & 55.7 & 23.5 & 49.4 \\
PaCo \cite{Cui2021PaCo} & - & - & -& 52.0 \\
BCL \cite{zhu2022BLC_balanced_constrastive} & 69.7 & 53.8 & 35.5 & 53.9 \\
NCL \cite{li2022NCL_nested_collab} & - & - & - & 54.2 \\
SADE \cite{zhang2021test} &  - &  -& - & 52.2 \\

\textbf{BalPoE} (ours) & \textbf{71.4{\scriptsize $\pm$0.6}} & \textbf{58.0{\scriptsize $\pm$0.7}} & \textbf{35.4{\scriptsize $\pm$0.4}} & \textbf{55.9{\scriptsize $\pm$0.4}} \\

\bottomrule
\end{tabular}
\end{table*}

\begin{table*}[ht]
\caption{Test accuracy (\%) of ResNet-50 / ResNeXt-50 trained on ImageNet-LT for methods under comparison. $\star$: Our reproduced results.}
\label{tab:sota_imagenet_regimes}
\centering
\begin{tabular}{lcccccccc}
\toprule
& \multicolumn{8}{c}{ImageNet-LT} \\
\cmidrule(r){2-9}
& \multicolumn{4}{c}{ResNet50} & \multicolumn{4}{c}{ResNeXt50} \\
\cmidrule(r){2-5}
\cmidrule(r){6-9}
Methods & Many & Medium & Few & All & Many & Medium & Few & All \\
\midrule
CE$^\star$ & 66.5 & 40.5 & 15.9 & 47.2 & 68.1 & 41.5 & 14.0 & 48.0 \\
BS \cite{ren2020balanced} & - & - & - & - & 64.1  & 48.2 & 33.4 & 52.3 \\
LADE \cite{hong2021lade} & - & - & - & - & 65.1 & 48.9 & 33.4 & 53.0 \\
MiSLAS \cite{zhong2021mislas_mixup}  & 61.7  & 51.3  & 35.8  & 52.7 & - & - & - & - \\
RIDE \cite{wang2020experts}  & 66.2 & 51.7 & 34.9 & 54.9 & 67.6 & 53.5 & 35.9 & 56.4 \\
ACE \cite{cai2021ace} & - & - & - & 54.7 & - & - & - & 56.6 \\
TLC \cite{li2022TLC_trustworthy} & \textbf{69.3} & \textbf{56.7} & 37.9 & 54.6 & - & - & - & - \\
SADE \cite{zhang2021test} & - & - & - & - & 66.5 & 57.0 & 43.5 & 58.8 \\

\textbf{BalPoE} (ours) & 66.0 & \textbf{56.7} & \textbf{43.6} & \textbf{58.5} & \textbf{68.2} & \textbf{57.2} & \textbf{44.9} & \textbf{59.8} \\

\midrule

\textit{Longer training} \\
\midrule
PaCo \cite{Cui2021PaCo} & 65.0 & 55.7 & 38.2 & 57.0 & 67.5 & 56.9 & 36.7 & 58.2 \\
NCL \cite{li2022NCL_nested_collab} & - & - & - & 59.5 & - & - & - & 60.5 \\
SADE \cite{zhang2021test} & - & - & - & - & 67.3 & \textbf{60.4} & \textbf{46.4} & 61.2 \\
\textbf{BalPoE} (ours) & 67.8 & \textbf{59.2} & \textbf{46.5} & \textbf{60.8} & \textbf{70.8} & 59.5 & \textbf{46.4} &\textbf{62.0} \\
\bottomrule
\end{tabular}
\end{table*}

\begin{table}[ht]
\caption{Test accuracy (\%) of ResNet-50 trained on Inaturalist-2018 for methods under comparison.  $\star$: Our reproduced results.}
\label{tab:sota_inaturalist_regimes}
\centering
\begin{tabular}{lcccc}
\toprule
& \multicolumn{4}{c}{Inaturalist} \\
\cmidrule(r){2-5}
Methods & Many & Medium & Few & All \\
\midrule
CE$^\star$ & \textbf{76.4} & 66.8 & 60.1 & 65.2 \\
LDAM-DRW \cite{cao2019learning} & - & - & - & 68.0 \\
BS \cite{ren2020balanced} & 70.9 & 70.7 & 70.4 & 70.6 \\
LADE \cite{hong2021lade} & 68.9 & 68.7 & 70.2 & 69.3 \\
MiSLAS \cite{zhong2021mislas_mixup} & 73.2 & 72.4 & 70.4 & 71.6 \\
RIDE \cite{wang2020experts} & 70.2 & 72.2 & 72.7 & 72.2 \\
ACE \cite{cai2021ace} & - & - & - & 72.9 \\
SADE \cite{zhang2021test} & 74.5 & 72.5 & 73.0 & 72.9 \\

\textbf{BalPoE} (ours) & 73.2 & \textbf{75.5}  & \textbf{74.7} &  \textbf{75.0} \\
\midrule
\textit{Longer training} \\
\midrule
PaCo \cite{Cui2021PaCo} & 70.3 & 73.2 & 73.6 & 73.2 \\
NCL \cite{li2022NCL_nested_collab} & 72.7 & 75.6 & 74.5 & 74.9 \\
SADE \cite{zhang2021test} & \textbf{75.5} & 73.7 & 75.1 & 74.5 \\
\textbf{BalPoE} (ours) & \underline{75.0} & \textbf{77.4} & \textbf{76.9} & \textbf{76.9} \\
\bottomrule
\end{tabular}
\end{table}

\paragraph{Mixup encourages expert specialization}

We plot the test accuracy for CIFAR-100-LT100 as a function of $\alpha$ in Figure~\ref{fig:alpha_vs_test_error_ensemble}. Results are shown for the final ensemble as well as for the different experts separately, and on different data regimes. We observe that mixup promotes expert specialization, especially for the tail expert which becomes a specialist in few-shot classes. Expert regularization boosts the performance of the ensemble, attaining its peak performance at $\alpha=\text{0.2-0.4}$. This observation is consistent with the study of mixup under the balanced setting \cite{zhang2017mixup}, and previous findings suggesting that large $\alpha$ values may lead to underfitting, due to \textit{manifold intrusion} \cite{guo2019out_of_manifold_intrusion}.

\begin{figure*}[ht]
	\centering
	\setlength{\tabcolsep}{0pt}
	\begin{tabular}{cccc}
		\includegraphics[height=4.3cm]{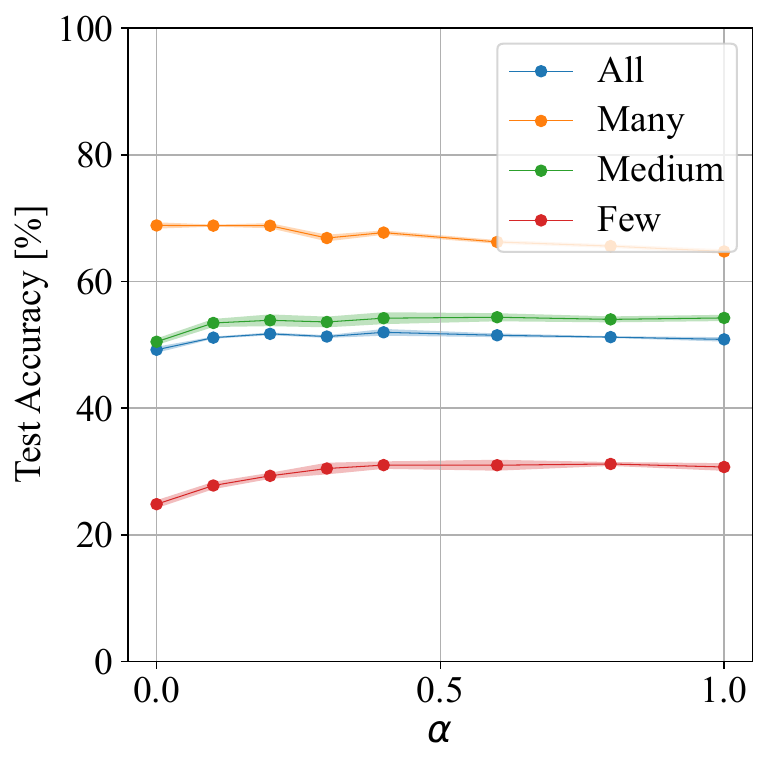} &
		\includegraphics[height=4.3cm,trim=30 0 0 0, clip]{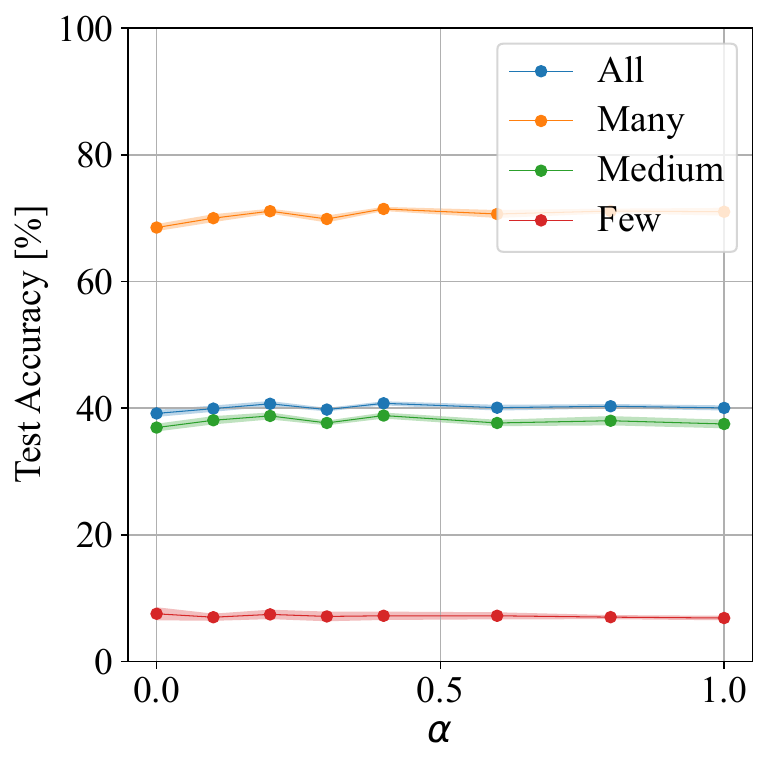} &
		\includegraphics[height=4.3cm,trim=30 0 0 0, clip]{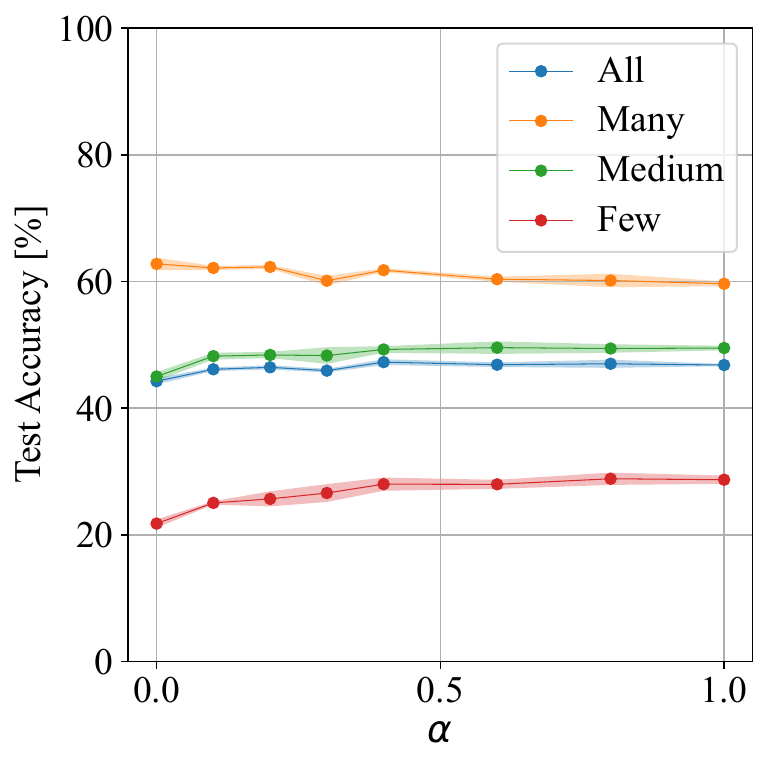} &
		\includegraphics[height=4.3cm,trim=30 0 0 0, clip]{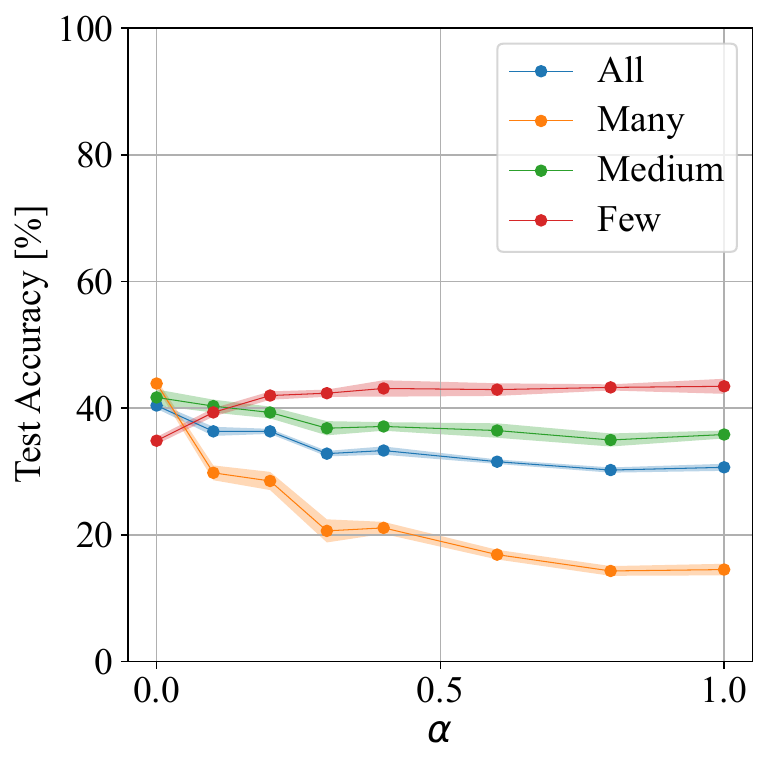} \\
		(a) BalPoE & (b) Head expert & (c) Medium expert & (d) Tail expert
	\end{tabular}
	\caption{Test accuracy on CIFAR-100-LT with IR=100, as a function of the mixup parameter $\alpha$, for (a) BalPoE with three experts, (b) head expert, (c) medium expert, and (d) tail expert.}
	\label{fig:alpha_vs_test_error_ensemble}
\end{figure*}

\paragraph{Results on CIFAR-10-LT.}
Table \ref{tab:sota_cifar10} presents results for CIFAR-10-LT, which includes 10 classes, under different imbalance ratios. By default, we train BalPoE with mixup regularization ($\alpha=0.8$). We observe that our approach promotes a consistent boost in performance under less extreme scenarios, where there are a few classes with arguably enough data. Moreover, we demonstrate that, despite the lower difficulty of this task, BalPoE can still benefit from stronger data augmentation and more extended training, pushing the state-of-the-art on CIFAR-10-LT across several levels of class imbalance.

\begin{table*}[ht]
\caption{Test accuracy (\%) of ResNet32 on CIFAR-10-LT for different imbalance ratios (IR). $\star$: Our reproduced results. $\dagger$: From \cite{xu2021bayias}. $\ddagger$: From \cite{zhang2021test}. $\S$: From \cite{zhou2020bbn}.
}

\label{tab:sota_cifar10}
\renewcommand{\arraystretch}{0.5}
\centering
\begin{tabular}{lccc}
\toprule
& \multicolumn{3}{c}{CIFAR-10-LT}\\
\cmidrule(r){2-4}

Method $\downarrow$ \qquad\qquad\qquad\quad IR $\rightarrow$ & 10 & 50 & 100 \\
\midrule
CE$^\star$ & 87.2{\scriptsize $\pm$0.3} & 77.3{\scriptsize $\pm$0.4} & 71.3{\scriptsize $\pm$0.9} \\ 
LDAM-DRW \cite{cao2019learning} & 88.2 & 81.8$^\dagger$ & 77.1 \\
BS \cite{ren2020balanced} & \textbf{90.9{\scriptsize $\pm$0.4}} & - & 83.1{\scriptsize $\pm$0.4} \\
MiSLAS \cite{zhong2021mislas_mixup} & 90.0 & \underline{85.7} & 82.1 \\
RIDE$^\ddagger$ \cite{wang2020experts} & 89.7  & - & 81.6 \\
ACE \cite{cai2021ace} & - & 84.3 & 81.2 \\
UniMix+Bayias \cite{xu2021bayias} & 89.7 & 84.3 & 82.7 \\
TLC \cite{li2022TLC_trustworthy} & - & - & 80.3 \\
SADE \cite{zhang2021test} & \underline{90.8} & - & \underline{83.8} \\

\textbf{BalPoE} (ours) & 90.2{\scriptsize $\pm$0.2} & \textbf{86.2{\scriptsize $\pm$0.2}} & \textbf{84.2{\scriptsize $\pm$0.3}} \\
\midrule
\textit{Longer training} \\
\midrule
NCL \cite{li2022NCL_nested_collab} & - & 87.3 & 85.5 \\ 
\textbf{BalPoE} (ours) & \textbf{91.9{\scriptsize $\pm$0.1}} & \textbf{88.5{\scriptsize $\pm$0.2}} & \textbf{86.8{\scriptsize $\pm$0.2}} \\
\bottomrule
\end{tabular}

\end{table*}

\newpage

\subsection{Extended calibration comparison}

\newpage

\paragraph{Definition of calibration.} Intuitively, calibration is the measure of how well the model confidence reflects the true probability, i.e. when the model predicts a class with $90\%$ confidence, it should be the correct class in $90\%$ of the cases on average. Formally, a model $h$ is said to be \textit{calibrated} \cite{brocker2009perfect_calibration} if 
\begin{equation}
    \mathbb{P}(Y = y | h(X) = \bm{p} ) = \bm{p}_y \ \ \ \ \ \  \forall \bm{p} \in \Delta, 
    \label{eqn:perfect_calibration}
\end{equation}
where $\Delta = \{p \in [0, 1]^C | \sum_{y \in \mathcal{Y}} p_y = 1\} $ is a (C-1)-dimensional simplex. 
A strictly weaker, but more useful, condition is \textit{argmax calibration} \cite{guo2017calibration}, which requires % $h$ is said to be \textit{argmax calibrated} if 
\begin{equation}
    \mathbb{P}(Y = \argmax h(X) | \max h(X) = p ) = p \ \ \ \ \ \  \forall p \in [0, 1].
    \label{eqn:argmax_calibration}
\end{equation}

In practice, we empirically estimate the disagreement between the two sides of \eqref{eqn:argmax_calibration} over a discrete set of samples. Given a dataset $\mathcal{D} = \{x_i, y_i\}^{N}_{i=1}$, denote $\hat{p}_i$ the predicted confidence of sample $x_i$. \cite{guo2017calibration} propose to group predictions into $M$ discrete intervals, and then calculate accuracy and confidence over the respective batch of samples. Let $B_m$ denote the batch of indices in the $m$ interval, we define the average accuracy of $B_m$ as $acc(B_m) = \frac{1}{|B_m|} \sum_{i \in B_m} \mathbf{1}(\hat{y}_i = y_i)$. Similarly, the average confidence of $B_m$ is defined as $conf(B_m) = \frac{1}{|B_m|} \sum_{i \in B_m} \hat{p}_i$. We estimate the Expected Calibration Error (ECE) as a weighted average of the batch's differences between accuracy and confidence, i.e. 
\begin{equation}
    ECE = \sum_{m=1}^{M} \frac{|B_m|}{n} |acc(B_m) - conf(B_m)|,
    \label{eqn:ece_metric}
\end{equation}
where $n$ denotes the number of samples in each equally-spaced interval. Analogously, the Maximum Calibration Error (MCE) describes the maximum difference between accuracy and confidence, i.e.
\begin{equation}
    MCE = \max_{m \in \{1,..,M\}} |acc(B_m) - conf(B_m)|.
    \label{eqn:mce_metric}
\end{equation}

\paragraph{Extended discussion.}  We present reliability diagrams in Figure \ref{fig:reliability} and Figure \ref{fig:reliability_cifar10} for CIFAR-100-LT-100 and CIFAR-10-LT-100, respectively, where we plot the accuracy as a function of the model confidence. Ideally, for samples where the confidence is $C$, the rate at which the prediction is correct should be the same, namely $C$. This is highlighted by the diagonal line in the diagram, which corresponds to a perfectly calibrated model. For CIFAR-100-LT-100, the ECE for a single model trained with ERM is $31.5\%$, which is reduced to $23.1\%$ for BS (equivalent to $\lambda=0$), and further reduced to $16.9\%$ with a BalPoE of 3 experts ($\lambda=\{1, 0, -1\}$). Remarkably, mixup can further improve the calibration of our approach, leading to an ECE of $4.1\%$. We observe similar gains for CIFAR-10-LT-100 in terms of calibration, see  Figure \ref{fig:reliability_cifar10}, and generalization performance, as shown in Table \ref{tab:sota_cifar10_calibration}. We conclude that meeting the calibration assumption is vital for our logit-adjusted expert framework, which we argue explains the large performance gains obtained by using mixup.

\begin{table}[t]
\renewcommand{\arraystretch}{0.7}
\caption{Expected calibration error (ECE), maximum calibration error (MCE), and test accuracy (ACC) on CIFAR-10-LT-100. $\star$: Our reproduced results, where mixup is trained with $\alpha=0.8$. $\dagger$: from \cite{xu2021bayias}. $\ddagger$: our approach trained with ERM.}

\label{tab:sota_cifar10_calibration}
\centering

\resizebox{0.5\textwidth}{!}{

\begin{tabular}{lccc}
\toprule
& \multicolumn{3}{c}{CIFAR-10-LT-100} \\
\cmidrule(r){2-4}
Method $\downarrow$ & ECE $\downarrow$ & MCE $\downarrow$ & ACC $\uparrow$\\
\midrule

CE$^\star$ & 19.1{\scriptsize $\pm$0.9} & 33.9{\scriptsize $\pm$2.0} & 71.3{\scriptsize $\pm$0.9} \\
Bayias \cite{xu2021bayias} & \textbf{11.0} & \textbf{23.7} & 78.7 \\
TLC \cite{li2022TLC_trustworthy} & 13.1 & - & \underline{80.3} \\

\textbf{BalPoE}$^\ddagger$ (ours) & \textbf{11.0{\scriptsize $\pm$0.3}} & \underline{27.7{\scriptsize $\pm$2.7}} & \textbf{80.5{\scriptsize $\pm$0.3}} \\

\midrule

Mixup$^\star$\cite{zhang2017mixup} & \textbf{3.7{\scriptsize $\pm$0.4}} & \textbf{12.9{\scriptsize $\pm$4.9}} & 72.9{\scriptsize $\pm$0.7} \\
Remix$^\dagger$\cite{chou2020remix} & 15.4 & 28.0 & 75.4 \\

MiSLAS \cite{zhong2021mislas_mixup} & \textbf{3.7} & - & 82.1 \\
UniMix+Bayias \cite{xu2021bayias} & 10.2 & 25.5 & \underline{82.7} \\

\textbf{BalPoE} (ours) & \underline{6.3{\scriptsize $\pm$0.7}}  & \underline{15.8{\scriptsize $\pm$4.7}} & \textbf{84.2{\scriptsize $\pm$0.3}} \\
\bottomrule
\end{tabular}

}

\end{table}

\begin{figure*}[!t]
	\centering
	\setlength{\tabcolsep}{0pt}
	\begin{tabular}{ccccc}
	
		\includegraphics[width=0.2\textwidth]{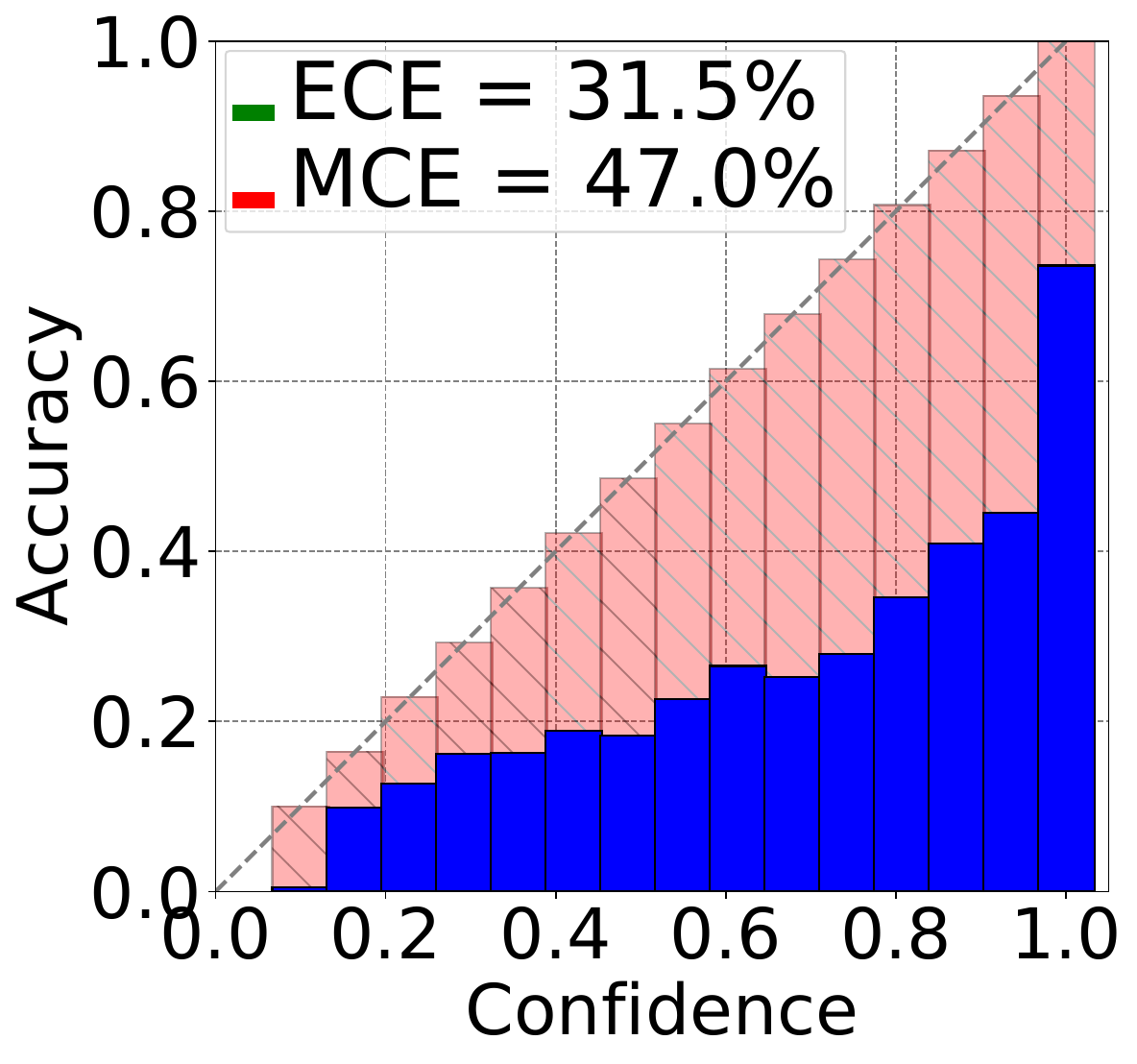} &
		\includegraphics[width=0.2\textwidth]{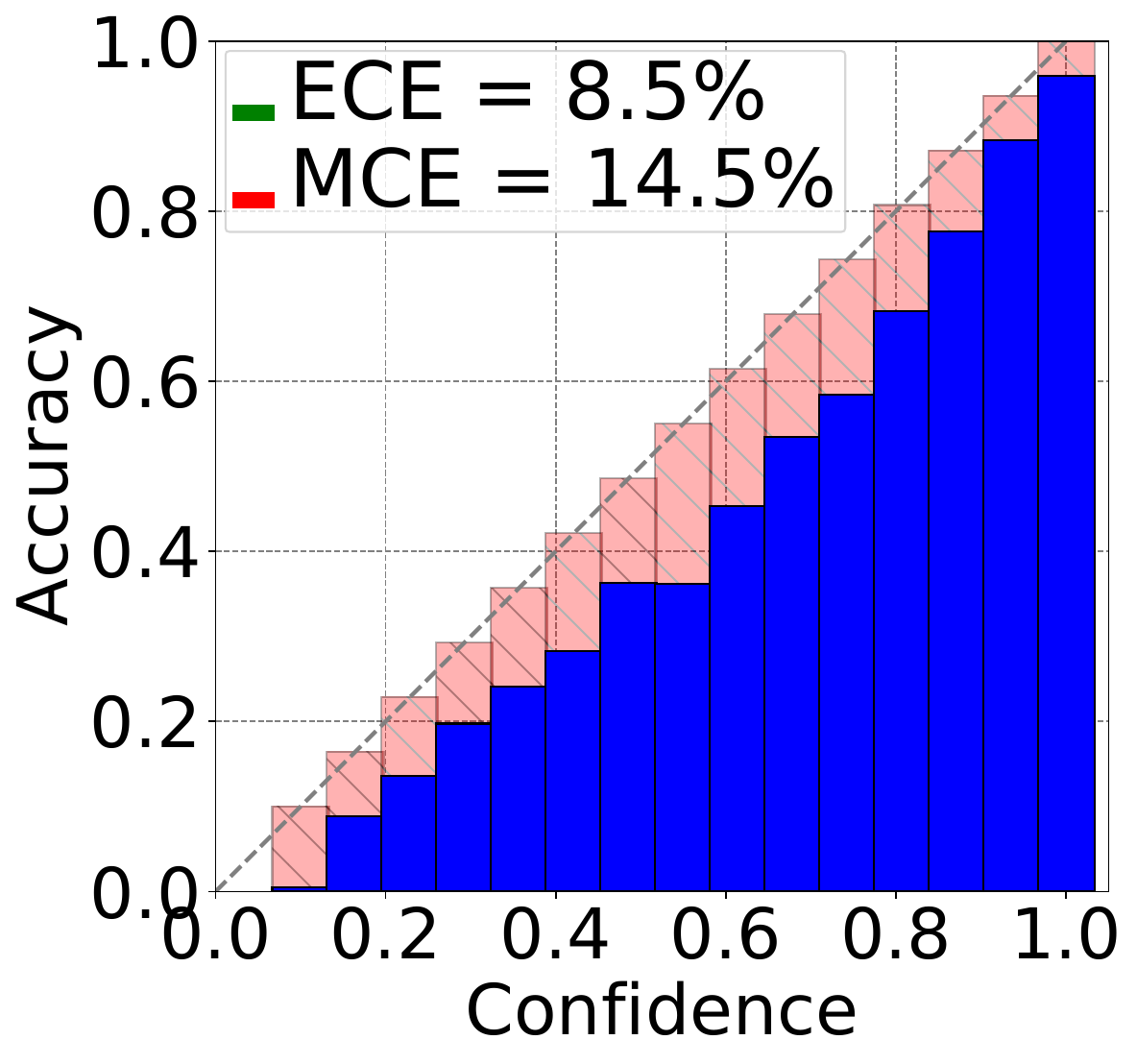} &			
		\includegraphics[width=0.2\textwidth]{figures/cifar100_ir100/reliability_diagr_cifar100_ir100_m=1_tau_avg=1.0_delta=0.0_alpha=0.0.pdf} &
		\includegraphics[width=0.2\textwidth]{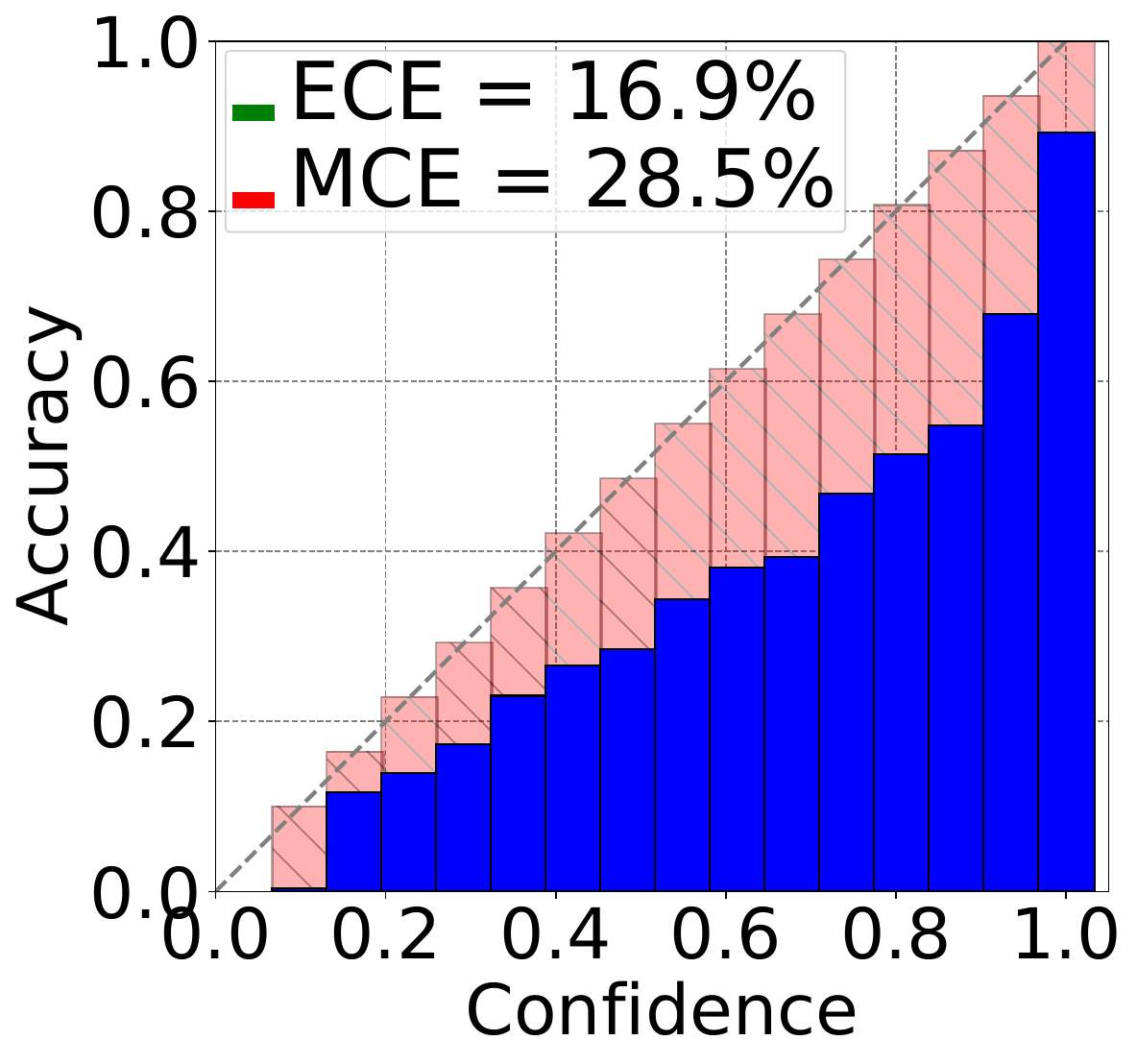} &
		\includegraphics[width=0.2\textwidth]{figures/cifar100_ir100/reliability_diagr_cifar100_ir100_m=3_tau_avg=1.0_delta=1.0_alpha=0.4.pdf} \\

		(a) CE & (b) Mixup & (c) BS & (d) Uncal. BalPoE & (e) BalPoE
		
	\end{tabular}
	\caption{Reliability plots for (a) CE, (b) mixup, (c) BS, (d) uncalibrated BalPoE (trained with ERM) and (d) BalPoE (trained with mixup). Computed over \textbf{CIFAR-100-LT-100} test set.}
	\label{fig:reliability}
\end{figure*}

\begin{figure*}[!t]
	\centering
	\setlength{\tabcolsep}{0pt}
	\begin{tabular}{ccccc}
		\includegraphics[width=0.2\textwidth]{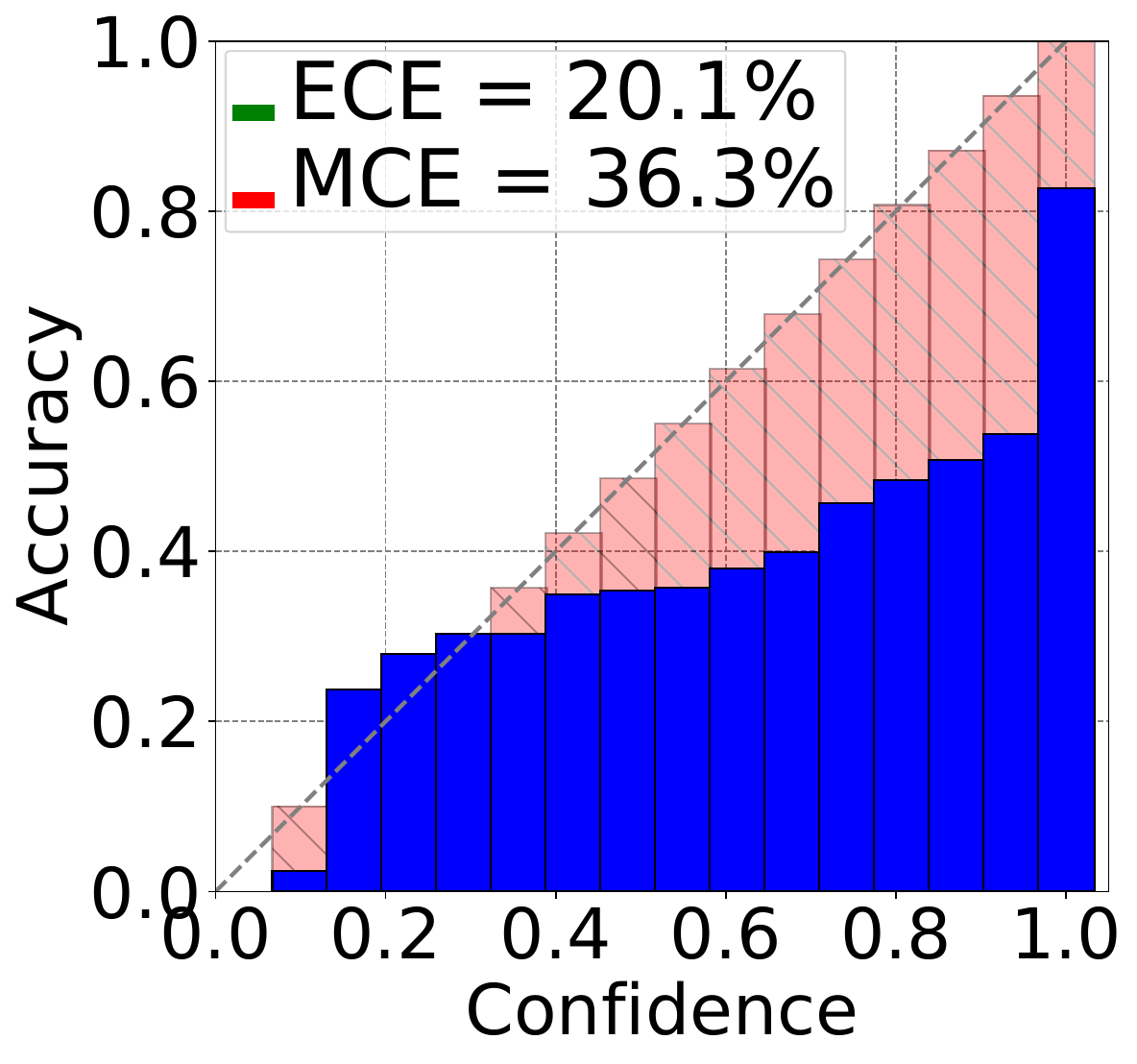} &
		\includegraphics[width=0.2\textwidth]{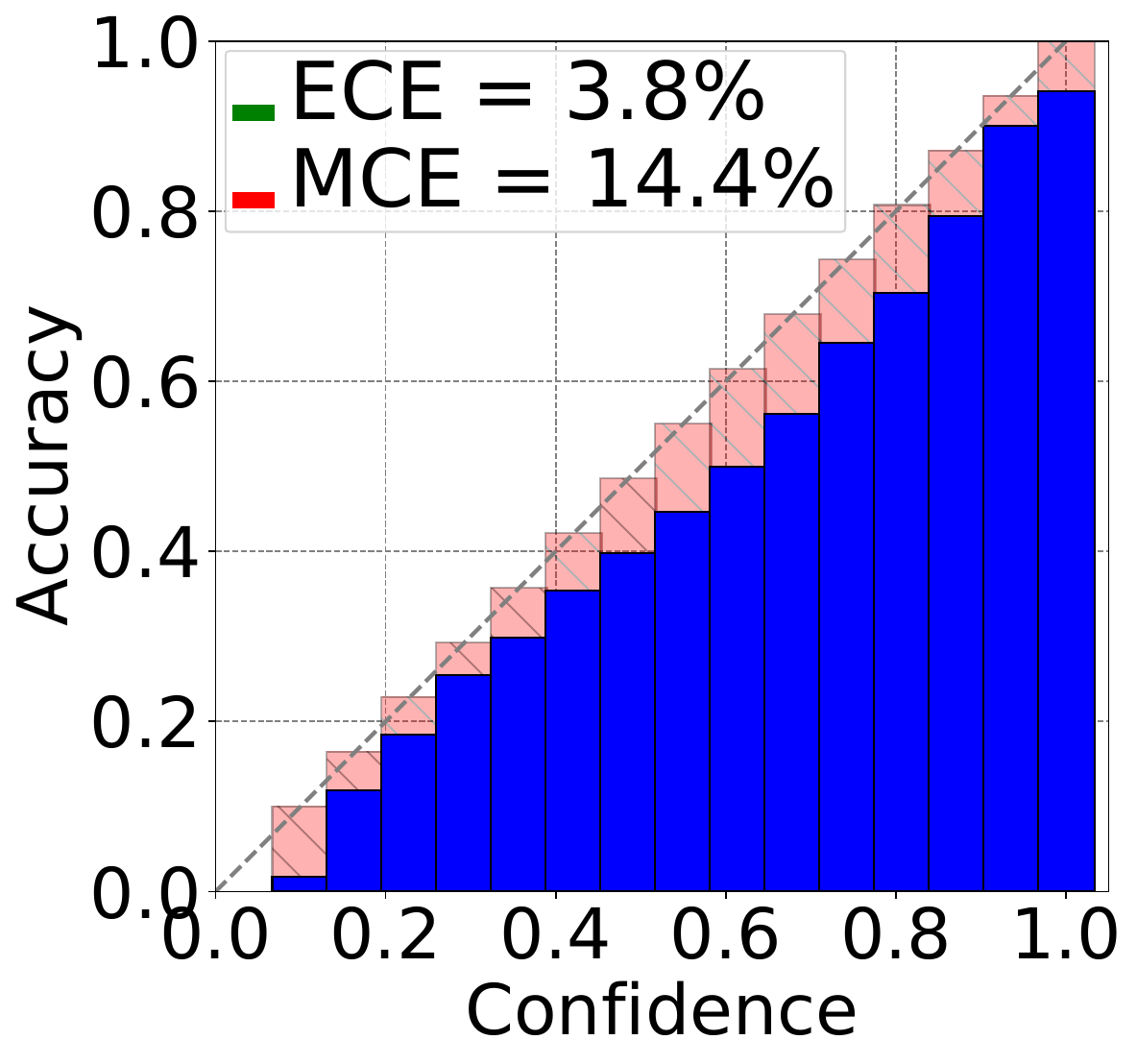} &			
		\includegraphics[width=0.2\textwidth]{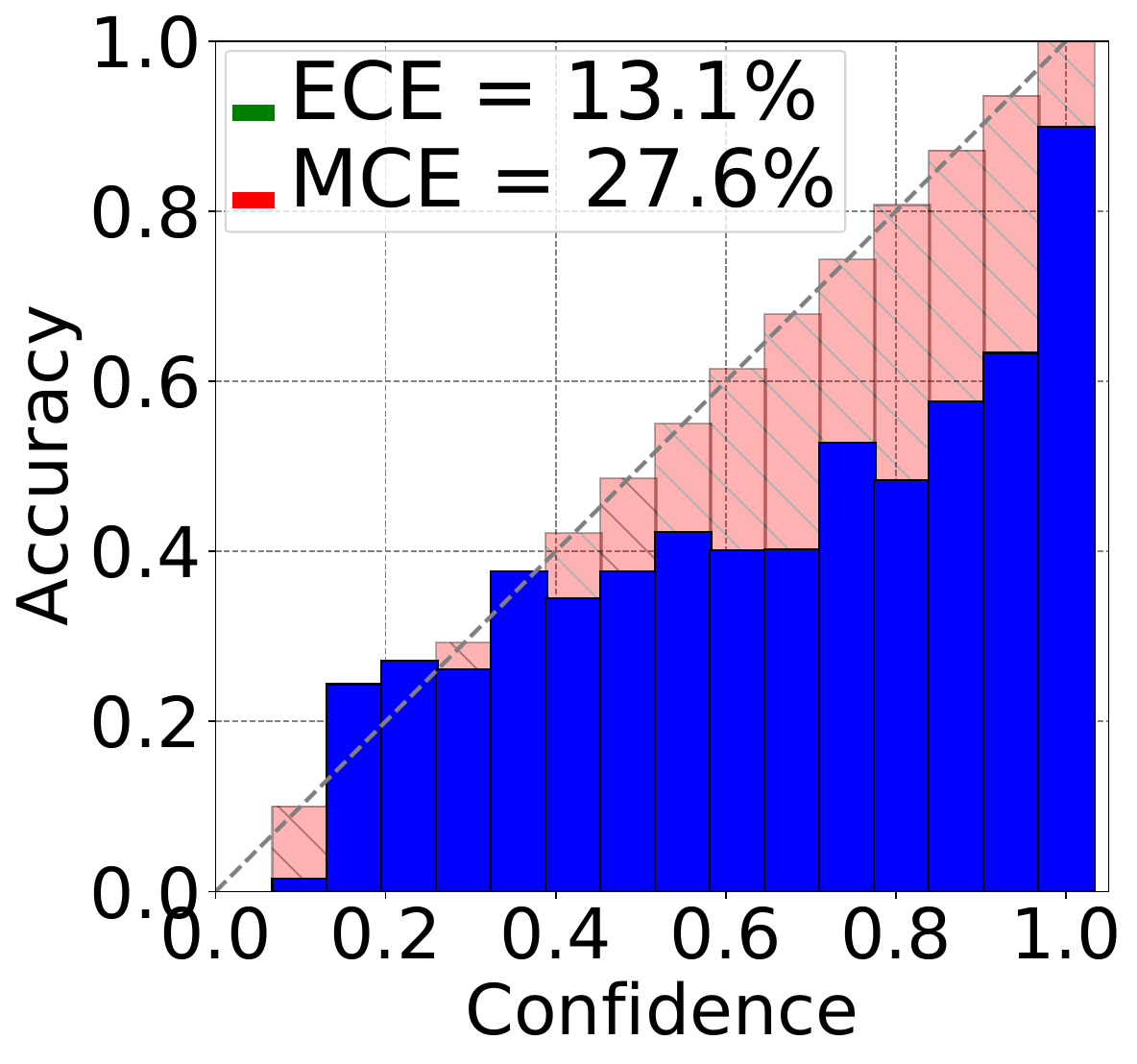} &
		\includegraphics[width=0.2\textwidth]{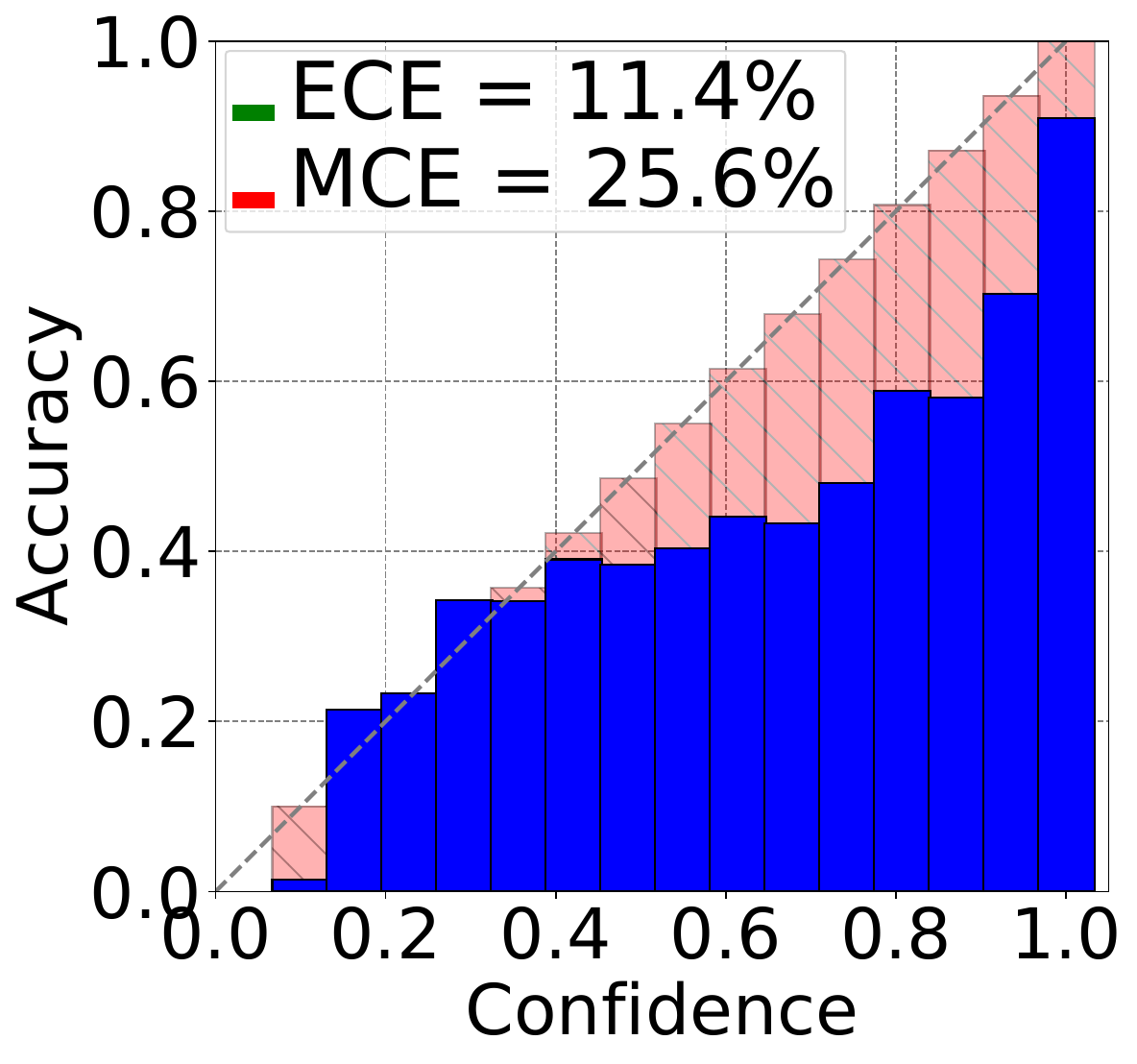} &
		\includegraphics[width=0.2\textwidth]{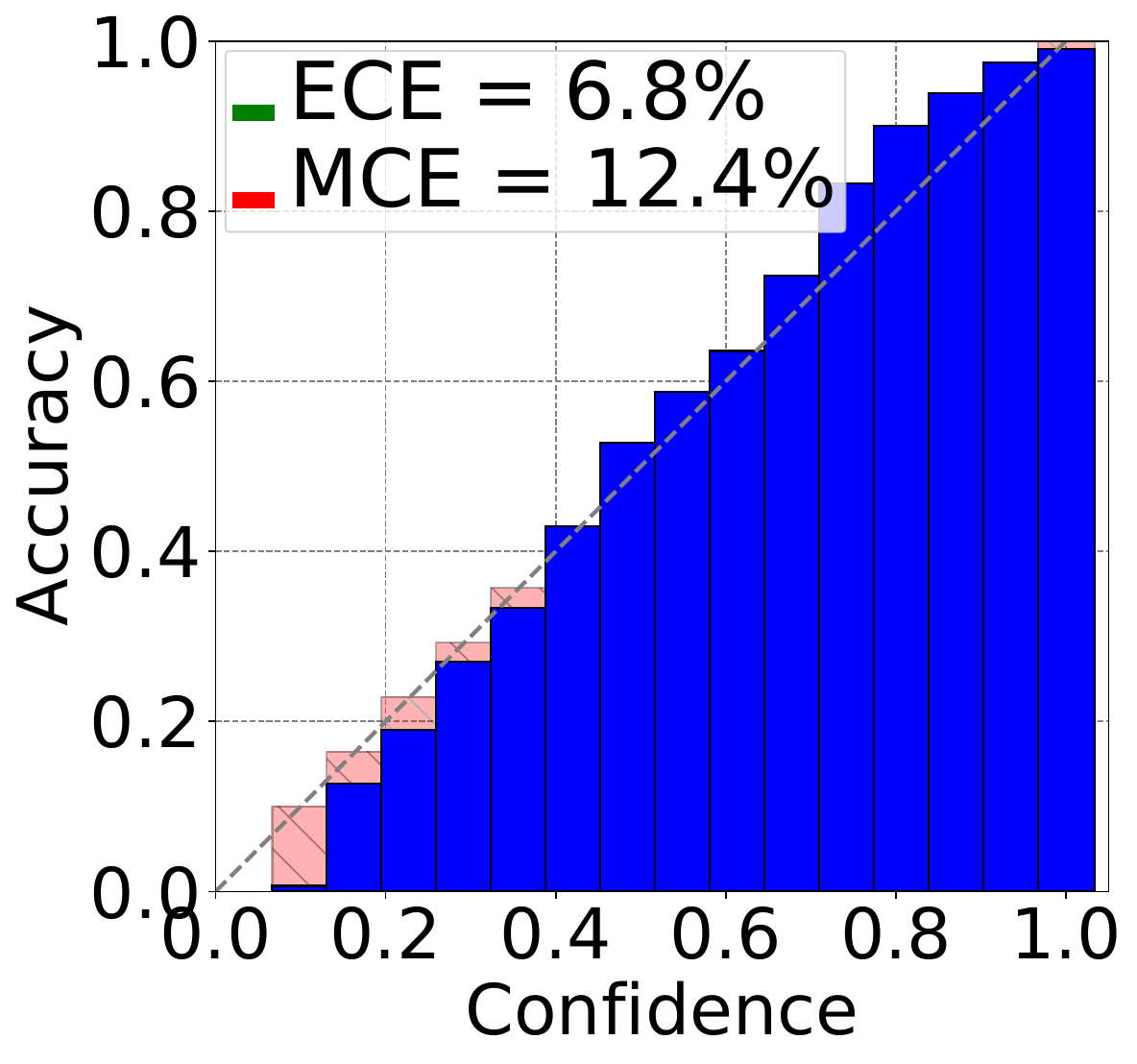} \\
		(a) CE & (b) Mixup & (c) BS & (d) Uncal. BalPoE & (e) BalPoE
	\end{tabular}
	\caption{Reliability plots for (a) CE, (b) mixup, (c) BS, (d) uncalibrated BalPoE (trained with ERM), and (e) BalPoE (trained with mixup). Computed over \textbf{CIFAR-10-LT-100} test set.}
	\label{fig:reliability_cifar10}
\end{figure*}

\comment{

\begin{figure*}
	\centering
	\begin{tabular}{cc}
		\includegraphics[height=0.27\textwidth]{figures/cifar100_ir100/alpha_vs_expected_calib_error_for_taus.pdf} &
		\includegraphics[height=0.27\textwidth]{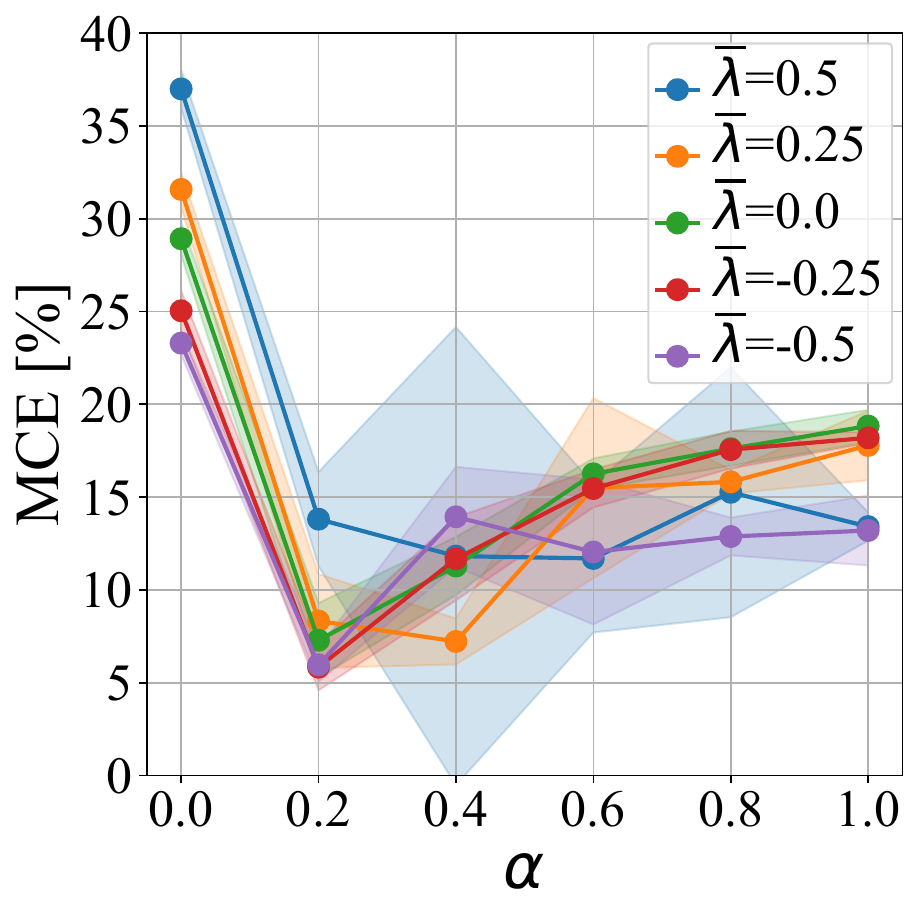} \\
		(a) & (b)
	\end{tabular}
	\caption{From left to right, (a) expected calibration error (ECE) and (b) maximum calibration error (MCE) on CIFAR-100-LT-100 test split.}
	\label{fig:calib_errors_extended}
\end{figure*}

}

\newpage

\subsection{Extended comparison under diverse test distributions}

\paragraph{Definition of shifted long-tailed datasets.} Following \cite{hong2021lade,zhang2021test}, we evaluate our approach under various test class distributions with different imbalance ratios, in order to simulate the diversity of real-world situations. We group these datasets into forward long-tailed distributions, the uniform distribution, and backward long-tailed distributions. For forward distributions, the classes are sorted in decreasing order according to the number of training samples, whereas for backward distributions the class order is flipped. See \cite{hong2021lade,zhang2021test} for a comprehensive description of these benchmarks.

\paragraph{Extended discussion.} In Tables \ref{tab:sota_cifar100-lt100_test_distributions}, \ref{tab:sota_cifar100-lt50_test_distributions} and \ref{tab:sota_cifar100-lt10_test_distributions} we present additional results of multiple shifted target distributions for CIFAR-100-LT with a training imbalance ratio of 100, 50 and 10, respectively. Across different distributions, our approach provides a significantly better \textit{head-tail trade-off} than other expert-based frameworks, outperforming SADE and RIDE by notable margins at forward and backward scenarios, respectively. Remarkably, the benefits of our unbiased ensemble can also be appreciated as more training data becomes available, particularly for IR$=$50 and IR$=$10. Our extensive evaluation further corroborates our early hypothesis: \textit{an ensemble of well-calibrated experts can be a more robust long-tailed classifier than single-expert (often uncalibrated) logit-adjusted approaches}, such as BS and LADE. Tables \ref{tab:sota_imagenet_test_distributions} and \ref{tab:sota_inaturalist_test_distributions} show additional results for ImageNet-LT and iNaturalist datasets, respectively, where we observe the effectiveness of our framework to tackle LT recognition under challenging large-scale datasets.

\begin{table*}[!t]
\renewcommand{\arraystretch}{0.5}
\begin{center}
\caption{Test accuracy (\%) on multiple test distributions for model trained on CIFAR-100-LT-100. $\dagger$: results from \cite{zhang2021test}. \textit{Prior}: test class distribution is used. $*$: Prior implicitly estimated from test data by self-supervised learning.}
\label{tab:sota_cifar100-lt100_test_distributions}
\begin{tabular}{lcccccccccccc}
    \toprule
    & & \multicolumn{11}{c}{CIFAR-100-LT-100} \\
    \midrule
    & & \multicolumn{5}{c}{Forward-LT} & Unif. & \multicolumn{5}{c}{Backward-LT} \\    
    \cmidrule(lr){3-7}
    \cmidrule(lr){8-8}
    \cmidrule(lr){9-13}
     Method & prior $\downarrow$ IR $\rightarrow$ & 50 & 25 & 10 & 5 & 2 & 1 & 2 & 5 & 10 & 25 & 50 \\
    \midrule
    Softmax$^\dagger$ & \ding{55} & 63.3 & 62.0 & 56.2 & 52.5 & 46.4 & 41.4 & 36.5 & 30.5 & 25.8 & 21.7 & 17.5 \\
    BS$^\dagger$ &\ding{55}  & 57.8 & 55.5 & 54.2 & 52.0 & 48.7 & 46.1 & 43.6 & 40.8 & 38.4 & 36.3 & 33.7 \\
    MiSLAS$^\dagger$ & \ding{55} & 58.8 & 57.2 & 55.2 & 53.0 & 49.6 & 46.8 & 43.6 & 40.1 & 37.7 & 33.9 & 32.1 \\
    LADE$^\dagger$ & \ding{55} & 56.0 & 55.5 & 52.8 & 51.0 & 48.0 & 45.6 & 43.2 & 40.0 & 38.3 & 35.5 & 34.0 \\
    
    RIDE$^\dagger$ & \ding{55} & 63.0 & 59.9 & 57.0 & 53.6 & 49.4 & 48.0 & 42.5 & 38.1 & 35.4 & 31.6 & 29.2 \\
    SADE & \ding{55} & 58.4 & 57.0 & 54.4 & 53.1 & 50.1 & 49.4 & 45.2 & 42.6 & 39.7 & 36.7 & 35.0 \\
    
    \textbf{BalPoE} & \ding{55} & \textbf{65.1} & \textbf{63.1} & \textbf{60.8} & \textbf{58.4} & \textbf{54.8} & \textbf{52.0} & \textbf{48.6} & \textbf{44.6} & \textbf{41.8} & \textbf{38.0} & \textbf{36.1} \\

    \midrule
    LADE$^\dagger$ & \checkmark & 62.6 & 60.2 & 55.6 & 52.7 & 48.2 & 45.6 & 43.8 & 41.1 & 41.5 & 40.7 & 41.6 \\
    SADE & * & 65.9 & 62.5 & 58.3 & 54.8 & 51.1 & 49.8 & 46.2 & 44.7 & 43.9 & 42.5 & 42.4 \\
    \textbf{BalPoE} & \checkmark & \textbf{70.3} & \textbf{66.8} & \textbf{62.7} & \textbf{59.3} & \textbf{54.8} & \textbf{52.0} & \textbf{49.2} & \textbf{46.9} & \textbf{46.2} & \textbf{45.4} & \textbf{46.1} \\
    \bottomrule
\end{tabular}
\end{center}
\end{table*}

\begin{table*}[!t]
\renewcommand{\arraystretch}{0.5}
\begin{center}
\caption{Test accuracy (\%) on multiple test distributions for model trained on CIFAR-100-LT-50. $\dagger$: results from \cite{zhang2021test}. \textit{Prior}: test class distribution is used. $*$: Prior implicitly estimated from test data by self-supervised learning.}
\label{tab:sota_cifar100-lt50_test_distributions}
\begin{tabular}{lcccccccccccc}
    \toprule
    & & \multicolumn{11}{c}{CIFAR-100-LT-50} \\
    \midrule
    & & \multicolumn{5}{c}{Forward-LT} & Unif. & \multicolumn{5}{c}{Backward-LT} \\    
    \cmidrule(lr){3-7}
    \cmidrule(lr){8-8}
    \cmidrule(lr){9-13}
     Method & prior $\downarrow$ IR $\rightarrow$ & 50 & 25 & 10 & 5 & 2 & 1 & 2 & 5 & 10 & 25 & 50 \\
    \midrule
    Softmax$^\dagger$ & \ding{55} & 64.8 & 62.7 & 58.5 & 55.0 & 49.9 & 45.6 & 40.9 & 36.2 & 32.1 & 26.6 & 24.6 \\
    BS$^\dagger$ & \ding{55} & 61.6 & 60.2 & 58.4 & 55.9 & 53.7 & 50.9 & 48.5 & 45.7 & 43.9 & 42.5 & 40.6 \\
    MiSLAS$^\dagger$ & \ding{55} & 60.1 & 58.9 & 57.7 & 56.2 & 53.7 & 51.5 & 48.7 & 46.5 & 44.3 & 41.8 & 40.2 \\
    LADE$^\dagger$ & \ding{55} & 61.3 & 60.2 & 56.9 & 54.3 & 52.3 & 50.1 & 47.8 & 45.7 & 44.0 & 41.8 & 40.5 \\
    
    RIDE$^\dagger$ & \ding{55} & 62.2 & 61.0 & 58.8 & 56.4 & 52.9 & 51.7 & 47.1 & 44.0 & 41.4 & 38.7 & 37.1 \\
    SADE & \ding{55} & 59.5 & 58.6 & 56.4 & 54.8 & 53.2 & 53.8 & 50.1 & 48.2 & 46.1 & 44.4 & 43.6 \\
    \textbf{BalPoE} & \ding{55} & \textbf{66.5} & \textbf{64.8} & \textbf{62.8} & \textbf{60.9} & \textbf{58.3} & \textbf{56.3} & \textbf{53.8} & \textbf{51.0} & \textbf{48.9} & \textbf{46.6} & \textbf{45.3} \\

    \midrule
    LADE$^\dagger$ & \checkmark & 65.9 & 62.1 & 58.8 & 56.0 & 52.3 & 50.1 & 48.3 & 45.5 & 46.5 & 46.8 & 47.8 \\
    SADE & * & 67.2 & 64.5 & 61.2 & 58.6 & 55.4 & 53.9 & 51.9 & 50.9 & 51.0 & 51.7 & 52.8 \\
    \textbf{BalPoE} & \checkmark & \textbf{71.1} & \textbf{68.3} & \textbf{64.8} & \textbf{61.8} & \textbf{58.2} & \textbf{56.3} & \textbf{54.4} & \textbf{53.4} & \textbf{53.4} & \textbf{53.8} & \textbf{55.4} \\
    \bottomrule
\end{tabular}
\end{center}
\end{table*}

\begin{table*}[t]
\renewcommand{\arraystretch}{0.5}
\begin{center}
\caption{Test accuracy (\%) on multiple test distributions for model trained on CIFAR-100-LT-10. $\dagger$: results from \cite{zhang2021test}. \textit{Prior}: test class distribution is used. $*$: Prior implicitly estimated from test data by self-supervised learning.}
\label{tab:sota_cifar100-lt10_test_distributions}
\begin{tabular}{lcccccccccccc}
    \toprule
    & & \multicolumn{11}{c}{CIFAR-100-LT-10} \\
    \midrule
    & & \multicolumn{5}{c}{Forward-LT} & Unif. & \multicolumn{5}{c}{Backward-LT} \\    
    \cmidrule(lr){3-7}
    \cmidrule(lr){8-8}
    \cmidrule(lr){9-13}
     Method & prior $\downarrow$ IR $\rightarrow$ & 50 & 25 & 10 & 5 & 2 & 1 & 2 & 5 & 10 & 25 & 50 \\
    \midrule
    Softmax$^\dagger$ & \ding{55} & \textbf{72.0} & \textbf{69.6} & 66.4 & 65.0 & 61.2 & 59.1 & 56.3 & 53.5 & 50.5 & 48.7 & 46.5 \\
    BS$^\dagger$ & \ding{55} & 65.9 & 64.9 & 64.1 & 63.4 & 61.8 & 61.0 & 60.0 & 58.2 & 57.5 & 56.2 & 55.1 \\
    MiSLAS$^\dagger$ & \ding{55} & 67.0 & 66.1 & 65.5 & 64.4 & 63.2 & 62.5 & 61.2 & 60.4 & 59.3 & 58.5 & 57.7 \\
    LADE$^\dagger$ & \ding{55} & 67.5 & 65.8 & 65.8 & 64.4 & 62.7 & 61.6 & 60.5 & 58.8 & 58.3 & 57.4 & 57.7 \\
    
    RIDE$^\dagger$ & \ding{55} & 67.1 & 65.3 & 63.6 & 62.1 & 60.9 & 61.8 & 58.4 & 56.8 & 55.3 & 54.9 & 53.4 \\
    SADE & \ding{55} & 66.3 & 64.5 & 64.1 & 62.7 & 61.6 & 63.6 & 60.2 & 59.7 & 59.8 & 58.7 & 58.6 \\
    \textbf{BalPoE} & \ding{55} & \underline{69.1} & \underline{68.2} & \textbf{67.4} & \textbf{66.8} & \textbf{65.7} & \textbf{65.1} & \textbf{63.8} & \textbf{63.0} & \textbf{62.3} & \textbf{61.8} & \textbf{61.3} \\
    
    \midrule
    LADE$^\dagger$ & \checkmark & 71.2 & 69.3 & 67.1 & 64.6 & 62.4 & 61.6 & 60.4 & 61.4 & 61.5 & 62.7 & 64.8 \\
    SADE & * & 71.2 & 69.4 & 67.6 & 66.3 & 64.4 & 63.6 & 62.9 & 62.4 & 61.7 & 62.1 & 63.0 \\
    \textbf{BalPoE} & \checkmark & \textbf{74.9} & \textbf{72.4} & \textbf{70.0} & \textbf{68.1} & \textbf{66.0} & \textbf{65.1} & \textbf{64.1} & \textbf{64.3} & \textbf{65.0} & \textbf{66.3} & \textbf{67.8} \\
    \bottomrule
\end{tabular}
\end{center}
\end{table*}

\begin{table*}[t]
\renewcommand{\arraystretch}{0.5}
\begin{center}
\caption{Test accuracy (\%) on multiple test distributions for ResNeXt50 trained on Imagenet-LT. $\dagger$: results from \cite{zhang2021test}. \textit{Prior}: test class distribution is used. $*$: Prior implicitly estimated from test data by self-supervised learning.}
\label{tab:sota_imagenet_test_distributions}
\begin{tabular}{lcccccccccccc}
    \toprule
    & & \multicolumn{11}{c}{Imagenet-LT} \\
    \midrule
    & & \multicolumn{5}{c}{Forward-LT} & Unif. & \multicolumn{5}{c}{Backward-LT} \\    
    \cmidrule(lr){3-7}
    \cmidrule(lr){8-8}
    \cmidrule(lr){9-13}
    Method & prior $\downarrow$ IR $\rightarrow$ & 50 & 25 & 10 & 5 & 2 & 1 & 2 & 5 & 10 & 25 & 50 \\
    \midrule
    Softmax$^\dagger$ & \ding{55} & 66.1 & 63.8 & 60.3 & 56.6 & 52.0 & 48.0 & 43.9 & 38.6 & 34.9 & 30.9 & 27.6 \\    
    BS$^\dagger$ & \ding{55} & 63.2 & 61.9 & 59.5 & 57.2 & 54.4 & 52.3 & 50.0 & 47.0 & 45.0 & 42.3 & 40.8 \\
    MiSLAS$^\dagger$ & \ding{55} & 61.6 & 60.4 & 58.0& 56.3 & 53.7 & 51.4 & 49.2 & 46.1 & 44.0 & 41.5 & 39.5 \\
    LADE$^\dagger$ & \ding{55} & 63.4 & 62.1 & 59.9 & 57.4 & 54.6 & 52.3 & 49.9 & 46.8 & 44.9 & 42.7 & 40.7 \\

    RIDE$^\dagger$ & \ding{55} & \textbf{67.6} & \textbf{66.3} & 64.0 & 61.7 & 58.9 & 56.3 & 54.0 & 51.0 & 48.7 & 46.2 & 44.0 \\
    
    SADE & \ding{55} & 65.5 & 64.4 & 63.6 & 62.0 & 60.0 & 58.8 & 56.8 & 54.7 & 53.1 & 51.1 & 49.8 \\
    
    \textbf{BalPoE} & \ding{55} & \textbf{67.6} & \textbf{66.3} & \textbf{65.2} & \textbf{63.3} & \textbf{61.5} & \textbf{59.8} & \textbf{58.1} & \textbf{55.7} & \textbf{54.3} & \textbf{52.2} & \textbf{50.8} \\
    
    \midrule    
    
    LADE$^\dagger$ & \checkmark & 65.8  & 63.8  & 60.6  & 57.5  & 54.5  & 52.3  & 50.4  & 48.8  & 48.6  & 49.0 & 49.2 \\    
    
    SADE & * & 69.4 & 67.4 & 65.4 & 63.0 & 60.6 & 58.8 & 57.1 & 55.5 & 54.5 & 53.7 & 53.1  \\
    
    \textbf{BalPoE} & \checkmark & \textbf{72.5} & \textbf{70.2} & \textbf{67.3} & \textbf{64.6} & \textbf{61.8} & \textbf{59.8} & \textbf{58.3} & \textbf{57.2} & \textbf{56.6} & \textbf{56.6} & \textbf{56.9} \\
    \bottomrule
\end{tabular}
\end{center}
\end{table*}

\begin{table*}[!t]
\renewcommand{\arraystretch}{0.5}
\begin{center}
\caption{Test accuracy (\%) on multiple test distributions for ResNet50 trained on iNaturalist-2018. $\dagger$: results from \cite{zhang2021test}. \textit{Prior}: test class distribution is used. $*$: Prior implicitly estimated from test data by self-supervised learning.}
\label{tab:sota_inaturalist_test_distributions}
\begin{tabular}{lcccccccccccc}
    \toprule
    & & \multicolumn{5}{c}{Inaturalist} \\
    \midrule
    & & \multicolumn{2}{c}{Forward-LT} & Unif. & \multicolumn{2}{c}{Backward-LT} \\    
    \cmidrule(lr){3-4}
    \cmidrule(lr){5-5}
    \cmidrule(lr){6-7}
    Method & prior $\downarrow$ IR $\rightarrow$ & 3 & 2 & 1 & 2 & 3 \\
    \midrule
    Softmax$^\dagger$ & \ding{55} & 65.4 & 65.5 & 64.7 & 64.0 & 63.4 \\
    BS$^\dagger$ & \ding{55} & 70.3 & 70.5 & 70.6 & 70.6 & 70.8 \\
    MiSLAS$^\dagger$ & \ding{55} & 70.8 & 70.8 & 70.7 & 70.7 & 70.2 \\
    LADE$^\dagger$ & \ding{55} & 68.4 & 69.0 & 69.3 & 69.6 & 69.5 \\

    RIDE$^\dagger$ & \ding{55} & 71.5 & 71.9 & 71.8 & 71.9 & 71.8 \\
    SADE & \ding{55} & - & 72.4 & 72.9 & 73.1 & - \\
    
    \textbf{BalPoE} & \ding{55} & \textbf{74.3} & \textbf{75.0} & \textbf{75.0} & \textbf{75.1} & \textbf{74.7} \\
    
    \midrule    
    
    LADE$^\dagger$ & \checkmark & - & 69.1 & 69.3 & 70.2 & - \\    
    
    SADE & * & 72.3 & 72.5 & 72.9 & 73.5 & 73.3  \\
    
    \textbf{BalPoE} & \checkmark & \textbf{74.7} & \textbf{75.4} & \textbf{75.0} & \textbf{75.6} & \textbf{75.3} \\
    \bottomrule
\end{tabular}
\end{center}
\end{table*}

\end{document}